\documentclass[]{article}

\usepackage[T1]{fontenc}

\usepackage{babel}
\usepackage{bm}
\usepackage{graphicx} 
\usepackage{amsmath}
\usepackage{amssymb}
\usepackage{amsthm}
\usepackage{xcolor}
\usepackage{hyperref}
\usepackage[margin=1.5in]{geometry}
\usepackage{url}
\usepackage{eucal}
\usepackage{lipsum}
\usepackage{amsfonts}
\usepackage{graphicx}
\usepackage{epstopdf}
\usepackage{algorithmic}
\usepackage{enumitem}
\usepackage{amsopn}
\usepackage[numbers]{natbib}

\setlist[enumerate]{leftmargin=.5in}
\setlist[itemize]{leftmargin=.5in}




\title{Subspace Langevin Monte Carlo}

\author{Tyler  Maunu \qquad Jiayi Yao \\Brandeis University, Waltham,  MA }




\newcommand{\R}{\mathbb R}

\newcommand{\N}{\mathbb N}
\newcommand{\E}{\mathbb E}

\def\bA{\boldsymbol{A}}
\def\bB{\boldsymbol{B}}

\def\bD{\boldsymbol{D}}

\def\bG{\boldsymbol{G}}

\def\bI{\boldsymbol{I}}

\def\bP{\boldsymbol{P}}

\def\bU{\boldsymbol{U}}

\def\bW{\boldsymbol{W}}


\def\cE{\mathcal{E}}
\def\cF{\mathcal{F}}
\def\cG{\mathcal{G}}
\def\cH{\mathcal{H}}

\def\cL{\mathcal{L}}

\def\cP{\mathcal{P}}



\def\bzero{\boldsymbol{0}}
\def\bone{\boldsymbol{1}}


\def\bSigma{\boldsymbol{\Sigma}}


\def\diag{\mathrm{diag}}

\DeclareMathOperator{\Tr}{\text{Tr}}

\DeclareMathOperator{\Exp}{\mathsf{Exp}}
\DeclareMathOperator{\Log}{\mathsf{Log}}

\DeclareMathOperator{\id}{\mathsf{id}}
\DeclareMathOperator{\di}{\mathrm{d} \!}

\DeclareMathOperator{\diver}{\text{div}}
\DeclareMathOperator{\KL}{\mathsf{KL}}
\DeclareMathOperator{\Law}{\mathsf{Law}}

\DeclareMathOperator{\rel}{\mathsf{rel}}


\newtheorem{lem}{Lemma}
\newtheorem{defn}{Definition}
\newtheorem{assump}{Assumption}

\newtheorem{thm}{Theorem}

\newtheorem{prop}{Proposition}
\newtheorem{rem}{Remark}

\begin{document}

\maketitle

\begin{abstract}  
    Sampling from high-dimensional distributions has wide applications in data science and machine learning but poses significant computational challenges. We introduce Subspace Langevin Monte Carlo (SLMC), a novel and efficient sampling method that generalizes random-coordinate Langevin Monte Carlo and preconditioned Langevin Monte Carlo by projecting the Langevin update onto subsampled eigenblocks of a time-varying preconditioner at each iteration. The advantage of SLMC is its superior adaptability and computational efficiency compared to traditional Langevin Monte Carlo and preconditioned Langevin Monte Carlo. Using coupling arguments, we establish error guarantees for SLMC and demonstrate its practical effectiveness through a few experiments on sampling from ill-conditioned distributions.
\end{abstract}

\section{Introduction}

The Langevin diffusion and its variants have become a fundamental object of study in modern machine learning. Practically, discretizations of the Langevin diffusion are highly scalable for generating samples from complex, high-dimensional target distributions. Successful applications of these methods include denoising diffusion models \cite{ho2020denoising,song2021score}, characterization of complex Bayesian posteriors \cite{durmus2019high}, and differential privacy mechanisms \cite{chourasia2021differential}. On the mathematical side, these diffusions are deeply connected to Wasserstein gradient flows. This connection has been used to study their convergence and, consequently, to develop new and more efficient diffusions. 

As hinted at in the previous paragraph, the Langevin diffusion solves an optimization problem in Wasserstein space. Taking this point of view, a theme in recent research has been to translate ideas from gradient-based optimization over Euclidean space to optimization over Wasserstein space. In particular, this has resulted in a growing zoo of optimization methods over Wasserstein space, including notions of gradient descent \cite{santambrogio2017euclidean}, mirror descent and Newton methods \cite{hsieh2018mirrored,zhang2020wasserstein,chewi2020exponential,wang2020information}, proximal algorithms \cite{chen2022improved}, preconditioned methods \cite{titsias2024optimal,bonet2024mirror}, and more.

One class of methods that is recently gaining prominence in the optimization literature is block coordinate and subspace descent methods \cite{nesterov2012efficiency,gower2019rsn,hanzely2020stochastic,kozak2021stochastic,cosson2023low,feinberg2024sketchy,zhao2024galore,liang2024memory}. These methods incorporate low-dimensional updates and, therefore, can be highly efficient and scalable in high dimensions. Projected optimization methods have advantages in terms of the number of function evaluations \cite{kozak2021stochastic}, directional derivative oracle calls \cite{cosson2023gradient}, and reduced memory for storing adaptive preconditioners \cite{feinberg2024sketchy,zhao2024galore,liang2024memory}. Furthermore, higher-order information can be efficiently incorporated; see, for example, \cite{hanzely2020stochastic}. Analogs of most of these methods have not yet been extended to the Wasserstein space outside the random coordinate version of Langevin Monte Carlo (RCLMC) \cite{ding2021random,ding2021langevin}.

In many modern applications, the Langevin algorithm is run in high-dimensional spaces. Therefore, it stands to reason that it could benefit from further studies into more efficient updating schemes and extending the ideas of RCLMC \cite{ding2021langevin}. Our work represents a significant step towards achieving this goal and understanding how to implement subspace methods in Wasserstein space. Our key contributions are as follows: 
\begin{enumerate}
    \item We present a novel formulation for projected Wasserstein gradient flows. By projecting the Wasserstein gradient to a lower-dimensional subspace, it is possible to reduce the complexity of flows while maintaining convergence guarantees in continuous time. After giving an abstract theoretical guarantee, we focus on a specific case of minimizing the KL divergence.
    \item We develop a random subspace discretization of the Wasserstein gradient flow of the Kullback-Leibler (KL) divergence called Subspace Langevin Monte Carlo (SLMC). SLMC significantly generalizes past work on Random Coordinate Langevin Monte Carlo (RCLMC) \cite{ding2021random} since it uses a low-rank preconditioner rather than just updating individual coordinates. A particular subcase of SLMC is block-coordinate LMC, which, to our knowledge, has not received attention in the literature.
    \item We conduct a coupling argument that gives error guarantees for SLMC under certain assumptions. We show that the bounds for SLMC outperform past bounds for other variants of LMC in specific regimes. Furthermore, SLMC can be implemented with reduced memory, which allows it to better adapt to high-dimensional settings.
    \item We finish with simple experiments on sampling from ill-conditioned distributions. Our examples include sampling from a Gaussian distribution, a funnel distribution, and a posterior in Bayesian logistic regression. These experiments demonstrate the flexibility and adaptability of SLMC for fast sampling in such settings.
\end{enumerate}

\subsection{Structure of Paper}

We now outline the structure of our paper. In Section \ref{sec:background}, we cover the necessary background to understand our work and give an overview of related work to place it in a broader context. Then, in Section \ref{sec:methods}, we discuss the SLMC algorithm as a generalization of past LMC methods. Section \ref{sec:theory} contains our theoretical analysis of the SLMC method, a discussion of the mathematical techniques, and a comparison to other methods. Finally, Section \ref{sec:exp} presents experiments on an ill-conditioned Gaussian distribution, a posterior in Bayesian logistic regression, and a funnel distribution.

\subsection{Notation}

We seek to generate samples from a distribution over $\R^d$ with density $\pi(\cdot) \propto \exp(-V(\cdot))$. For a positive definite matrix $\bA\succ \bzero$, we define an inner product $\langle x, y \rangle_{\bA} = x^T \bA y$ and its associated norm $\|x\|_{\bA}=\sqrt{\langle x, x\rangle_{\bA}}$. Random vectors are denoted with upper case letters, and matrices are denoted with bold upper case letters. Fixed vectors and scalars are lowercase letters. The Euclidean norm for vectors is $\|\cdot\|$, and the spectral (operator) norm for a matrix $\bA$ is $\|\bA\|_2$. The $\ell \times \ell$ identity matrix is $\bI_{\ell}$, and we will suppress the subscript when the dimension is obvious from context. For symmetric matrices, $\succeq$ will denote the Loewner ordering: $\bA \succeq \bB$ if $\bA - \bB$ is positive semidefinite. For a random vector $X$, $\Law(X)$ denotes its associated probability measure. The set of semiorthogonal matrices is $O(d,r) = \{\bU \in \R^{d \times r} : \bU^T \bU = \bI_r\}$.

\section{Background}
\label{sec:background}

We begin our paper by giving the necessary mathematical background to understand our work. Section \ref{subsec:wgf} discusses Wasserstein space and Wasserstein gradient flows. Then, Section \ref{subsec:lmc} discusses how the Langevin diffusion can simulate a Wasserstein gradient flow. Section \ref{subsec:plmc} shows how to generalize these gradient flows to incorporate preconditioning. Then, in Section \ref{subsec:euclcase}, we discuss the idea of subspace descent in Euclidean space. We finish this section with an overview of related work in Section \ref{subsec:related}.

\subsection{Wasserstein Geometry and Gradient Flows}
\label{subsec:wgf}

We now briefly outline Wasserstein gradient flows and their connection to Langevin Monte Carlo. We refer the reader to \cite{ambrosio2008gradient,villani2009optimal} for a more detailed discussion of these topics. For a friendly introduction to Wasserstein gradient flows, we recommend \cite{santambrogio2017euclidean,chewi2024log}. Throughout, we restrict to measures with finite second moment and densities with respect to the Lebesgue measure, $\cP_{2,ac}(\R^d)$. The 2-Wasserstein distance between measures $\mu$ and $\nu$ is
\begin{equation}\label{eq:wassdist}
    W_2^2(\mu, \nu) = \inf_{\gamma \in \Gamma(\mu, \nu)} \int \|x-y\|^2 d\gamma(x,y),
\end{equation}
where $\Gamma(\mu, \nu)$ is the set of all couplings between $\mu$ and $\nu$. The infimum is achieved at an optimal coupling $\gamma_\star$, and if $\mu$ is absolutely continuous $\gamma_\star = (\id, T_{\mu \to \nu})_{\#} \mu$, where $T_{\mu \to \nu}$ is an optimal transport map from $\mu$ to $\nu$. The set $\cP_{2,ac}$ endowed with this metric defines a geodesic metric space known as 2-Wasserstein space, or for the present discussion just Wasserstein space. Wasserstein space has a formal differential geometric structure, where the tangent space at an absolutely continuous measure $\mu \in \cP_{2,ac}(\R^d)$ is identified with the set of displacements, 
\[
    T_{\mu} \cP_{2}(\R^d) = \overline{\{\lambda(T-\id) : T \text{ OT map}, \ \lambda \geq 0\}}^{L^2(\mu)}.
\]
The logarithmic and exponential maps are
\[
    \Exp_{\mu}(v) = [\id + v]_{\#} \mu, \ \Log_{\mu}\nu = T_{\mu \to \nu} - \id,
\]
where $T_{\mu \to \nu}$ is the optimal transport map from $\mu$ to $\nu$. 

The function we consider minimizing over Wasserstein space is the KL divergence between a variable measure $\mu$ and the target measure $\pi \propto \exp(-V)$ that we wish to sample from. This objective is defined as
\begin{equation}\label{eq:kl}
    \cF(\mu) = \KL(\mu | \pi) = \int \log \left(\frac{\mu}{\pi}\right)  \mu.
\end{equation}
This setup is typical in sampling problems since we can optimize $\cF$ without knowing the normalizing constant of $\pi$.
Using the formal differential structure mentioned above, one can define a Wasserstein gradient flow associated with the minimization of this functional \cite[Section 4.3]{santambrogio2017euclidean} through the continuity equation
\begin{equation}\label{eq:wgf}
    \partial_t \mu_t =  \diver\Big(\mu_t \nabla \log \frac{\mu_t}{\pi}\Big).
\end{equation}
Here, $\nabla \log \frac{\mu_t}{\pi} = \nabla_{W_2} \cF(\mu_t)$ is the Wasserstein gradient of $\cF$, which is found via the Euclidean gradient of the first variation of $\cF$.

\subsection{Langevin Monte Carlo}
\label{subsec:lmc}

While the previous interpretation describes the evolution of densities using a PDE, we can also take a Lagrangian point of view and look at the evolution of particles $X_t \sim \mu_t$. It turns out that the Langevin diffusion encodes this evolution through the SDE \cite{jordan1998variational}
\begin{equation}\label{eq:ld}
    d X_t = - \nabla V(X_t) dt + \sqrt{2} dB_t,
\end{equation}
where $B_t$ is standard Brownian motion.
In particular, one can show that $\Law(X_t) = \mu_t$, where $\mu_t$ solves \eqref{eq:wgf}. Therefore, one can generate samples from $\pi$ by solving this stochastic differential equation.
The Euler-Maruyama discretization of this SDE is known as Langevin Monte Carlo ({\sf LMC})
\begin{equation}\label{eq:lmc}
    X_{k+1} = X_k - h_k\nabla V(X_k) + \sqrt{2h_k} \xi_k,
\end{equation}
where $\xi_k \sim N(0, I)$. One can view this as a discretization of \eqref{eq:wgf} using a forward-flow \cite{wibisono2018sampling}. 

To understand the theoretical performance of LMC, we consider a simplified setting of log-concave sampling \cite{chewi2024log}. We say that the potential $V$ is $\alpha$-strongly convex and $\beta$-smooth if
\[
    V(x) + \langle \nabla V(x), y-x\rangle + \frac{\alpha}{2} \|y-x\|^2 \leq V(y) \leq V(x) + \langle \nabla V(x), y-x\rangle + \frac{\beta}{2} \|y-x\|^2.
  \]
State-of-the-art bounds for mixing LMC in this setting can be found in \cite{durmus2019analysis}. For reasons that become more apparent later, we measure the iteration complexity of a method in terms of the number of calls it makes to a directional derivative oracle. For LMC in this setting, to have $W_2(\mu_{N_{\mathrm{LMC}}}, \pi) \leq \epsilon$, one needs 
\begin{equation}\label{eq:itcomplmc}
    N_{\mathrm{LMC}} = O\left(\frac{d^2 \beta}{\epsilon^2 \alpha^2 } \log \frac{W_2(\mu_0, \pi)}{\epsilon} \right).
\end{equation}
Here, $\beta/\alpha$ is the condition number, and the extra factor of $d$ over the result in \cite{durmus2019analysis} comes from the computation of $d$ directional derivatives at each iteration of the method.

\subsection{Preconditioned Langevin Methods}
\label{subsec:plmc}

This section discusses preconditioned gradient flows of the KL divergence and the preconditioned Langevin algorithm. Our main application in this work is developing efficient subspace approximations to these.

We define a preconditioned version of the Wasserstein gradient flow of the KL divergence \eqref{eq:wgf} as 
\begin{equation}\label{eq:pcwgf}
    \partial_t \mu_t =  \diver\Big(\mu_t \bA_t \nabla \log \frac{\mu_t}{\pi}\Big),
\end{equation}
for a sequence of positive definite matrices $\bA_t \succ \bzero$. \cite{ma2015complete} studied a more general version of this method with spatially varying preconditioners, and more recently, it has been shown that preconditioned methods can be more efficient than standard LMC \cite{bhattacharya2023fast,titsias2024optimal}. However, to avoid dealing with an extra bias term \cite{ma2015complete}, we will assume that $\bA_t$ is a function of time alone but not space.

If $\bA_t$ is fixed, then one can view preconditioned Langevin diffusion as a mirror Langevin diffusion with a linear mirror map \cite{zhang2020wasserstein,chewi2020exponential,ahn2021efficient}. 
Alternatively, this is a Wasserstein gradient flow with respect to $W_{2, \bA^{-1}}$, which is $W_2$ as defined in \eqref{eq:wassdist} with the norm replaced by $\|\cdot\|_{\bA^{-1}}$. 

The preconditioned Langevin diffusion corresponding to the flow \eqref{eq:pcwgf} is
\begin{equation}\label{eq:pld}
    dX_t =- \bA_t\nabla V(X_t) dt + \sqrt{2} \bA_t^{1/2} dB_t.
\end{equation}
We refer to the discretization of this as Preconditioned Langevin Monte Carlo (PLMC), 
\begin{equation}\label{eq:plmc}
    X_{k+1} =X_k- h_k\bA_k \nabla V(X_k) + \sqrt{2h_k} \bA_k^{1/2} \xi_k.
\end{equation}

For fixed $\bA$, the diffusion \eqref{eq:pld} has $\pi$ as a stationary distribution because it is a special case of Theorem 1 of \cite{ma2015complete}.
\begin{thm}[Theorem 1 of \cite{ma2015complete}]
    If $\bA_t = \bA(X_t) \succeq \bzero$ is a function of the spatial coordinate alone, then \eqref{eq:pld} has $\pi \propto \exp(-V)$ as a stationary distribution.
\end{thm}
While it is feasible to show stationarity for $\bA_t$ that is piecewise constant in time, which is sufficient for discrete time analysis, we leave to future work the development of conditions on the time-varying preconditioners such that $\pi$ is stationary.

\begin{rem}
The stationarity analysis does not apply to $\bA_t$ that are allowed to depend on $X_t$, which is the case in adaptive or Riemannian Langevin methods \cite{yu2024scalable}. This is due to the need for an additional correction term. 
\end{rem}

\subsection{Euclidean Subspace Descent}
\label{subsec:euclcase}

The main idea of the present work is to use low-rank matrices $\bA_t$ rather than full-rank matrices.
To this end, we review subspace descent for Euclidean optimization. Instead of sampling from a density, suppose we wish to minimize a function $f$ over $\R^d$. 

Subspace descent methods in Euclidean space have both continuous and discrete time formulations. In continuous time, for a sequence of projection matrices $\bP_t$, $t \in [0, \infty)$, the subspace gradient flow of $f$ initialized at $x_0$ solves the ODE
\begin{equation}\label{eq:projflow}
    \dot x_t = - \bP_t \nabla f(x_t).
\end{equation}
The forward Euler discretization of this flow is
\begin{equation}\label{eq:discrprojflow}
    x_{k+1} = x_{k} - h \bP_k \nabla f(x_k),
\end{equation}
where $(\bP_k)_{k \in \N}$ is a sequence of low-rank matrices and $h$ is the step size. The convergence of the discrete method is considered in \cite{kozak2021stochastic}. They assume that $\bP_k$ are scaled orthogonal projections onto an $r$-dimensional subspace such that $\E \bP_k = \bI$. Furthermore, it is assumed that $f$ is $\beta$-smooth and satisfies a Polyak-\L{}ojasiewicz (PL) inequality, $f(x) - f(x_\star) \leq \frac{1}{2\alpha} \|\nabla f(x)\|^2$. Under these conditions, one can show  
\begin{equation}
    \E f(x_k) - f(x_\star) \leq \omega^k (f(x_0) - f(x_\star))
\end{equation}
for $\omega = 1-\frac{r \alpha}{ d \beta}$. This implies an iteration complexity of $k=O(\frac{d\beta}{r \alpha} \log(1/\epsilon))$ to achieve an expected optimality gap $\E f(x_k) - f(x_\star) \leq \epsilon$. 

There is no free lunch regarding computational cost here. The per-iteration complexity of subspace gradient descent requires $r$ directional derivative computations. In contrast, the per-iteration complexity of gradient descent requires $d$ directional derivatives. Therefore, both methods require $O(d \beta/\alpha \log(1/\epsilon))$ directional derivative computations to achieve an $\epsilon$-accurate solution. Although this analysis suggests comparable complexity, subspace methods offer several key practical advantages.

First, the smoothness parameter $\beta$ can be traded for a potentially smaller directional smoothness. This is analogous to what is done in coordinate descent methods \cite{wright2015coordinate}. The second advantage is in terms of memory. In particular, the method does not need to store the full $d$-dimensional gradient at each iteration; it can just store the $r$-dimensional directional gradient.  
Third, in settings such as PDE-constrained optimization \cite{kozak2021stochastic}, computing more directional derivatives typically involves higher computational cost due to needing to solve the PDE.

We finish with a brief discussion of continuous time convergence of subspace descent over Euclidean spaces. We can establish a continuous-time convergence result analogous to the discrete-time case for subspace descent
\begin{prop}\label{prop:ctssubdescent}
    Suppose that $f$ satisfies a PL inequality with parameter $\alpha$. Then, the continuous time flow $\eqref{eq:projflow}$ satisfies
    \[ 
        f(x_t) - f(x_\star) \leq \exp\left( -\alpha \int_0^t m_s  ds \right) (f(x_0) - f(x_\star)),
    \]
    for any continuous $m_t$ such that
    \[
         0 \leq m_t \leq \frac{\langle \nabla f(x_t), \bP_t \nabla f(x_t) \rangle}{\|\nabla f(x_t)\|^2} .
    \]
    Furthermore, if $\E \bP_t = c \bI$, then
    \[
        \E f(x_t) - f(x_\star) \leq \exp\left( - t \alpha c \right) (f(x_0) - f(x_\star)).
    \]
\end{prop}
This result establishes that we can achieve convergence rates in the continuous-time setting even with projected updates.

In expectation, there is no loss in the convergence rate when compared with the standard result for gradient flow under a PL inequality, provided that $\E \bP_t = \bI$. Furthermore, if the gradient of $f$ is close to the span of $\bP_t$ and $m_t$ is close to 1, then we see that the convergence rate for the subspace flow is close to that of the deterministic full-dimensional gradient flow.

\subsection{Review of Related Work}
\label{subsec:related}

The works most directly related to ours are the sequence of works that studied Random Coordinate Langevin Monte Carlo (RCLMC) and its reduced-variance versions \cite{ding2021random,ding2021langevin}. These methods randomly sample coordinates and take a Langevin step along that coordinate only at each iteration. Another work in this vein is \cite{roy2022stochastic}, which uses random directions to compute zeroth-order approximations to the gradient that are then used for a Langevin method.

Subspace Diffusion Models \cite{jing2022subspace} learn denoising diffusion models over a sequence of nested subspaces. Since the SDE in each time interval has a solution constrained to each subspace, one can learn the score function restricted to these subspaces, which results in greater efficiency. Other recent work explores emergent low dimensionality within diffusion models. For example, \cite{wang2024diffusion} shows that diffusion models trained on low-rank Gaussian mixtures do not incur the curse of dimensionality. \cite{chen2024exploring} uses low-rank structures of the Jacobian of the posterior mean prediction to edit diffusion models.

Random projections have a long history in data science, beginning with the seminal work of \cite{johnson1986extensions}. Variants of random projection and sketching algorithms include CountSketch \cite{charikar2004finding}, sparse JL transform \cite{dasgupta2010sparse}, Subsampled Randomized Hadamard Transform \cite{ailon2009fast,tropp2011improved}, and more. See \cite{halko2011finding} for an overview. 

Our work draws inspiration from subspace methods for optimization. The oldest work on random subspace methods are random coordinate and block coordinate descent; see the discussion in \cite{nesterov2012efficiency,wright2015coordinate}.  Early works \cite{gower2015randomized,gower2015stochastic} use randomized methods for solving linear systems. \cite{frongillo2015convergence} analyzes subspace descent as a unification of coordinate and block coordinate descent methods. \cite{nesterov2017random} uses random Gaussian search directions for gradient-free optimization.
\cite{kozak2021stochastic} generalizes this to random subspaces and studies the effect of random projections on the convergence gradient descent. \cite{gower2019rsn} studies a Newton method on random subspaces, and \cite{hanzely2020stochastic} develops a random subspace-based Cubic Newton Method. Other recent work uses sketching for more efficient SDP solvers \cite{yurtsever2021scalable}. \cite{ivkin2019communication} uses gradient sketches for efficient distributed optimization. Variance reduction techniques can be used with gradient sketching as well \cite{hanzely2018sega,gower2021stochastic}.

Other works consider the estimation of a suitable subspace rather than random subspaces for optimization. \cite{cosson2023gradient} computes PCA on sampled gradients of a function and then runs gradient descent constrained to the found subspace. Other works seek to find memory-efficient methods for training large language models by using adaptive projections
\cite{feinberg2024sketchy,zhao2024galore,liang2024memory}. 

Some work has studied the low-dimensionality of stochastic gradient descent iterates in high-dimensional nonconvex optimization. Earlier works of \cite{sagun2016eigenvalues,sagun2017empirical} show this was empirically the case. Later, \cite{jastrzkebski2019relation} demonstrates that SGD dynamics tend to follow sharp directions in the loss landscape. \cite{arous2024high} shows that training aligns with principal subspaces of the Hessian or Gram matrices in multi-class logistic regression or XOR classification. On the other hand,
\cite{song2024does} argues that for general deep learning tasks, learning by SGD does not occur in the top subspace of the Hessian but rather in the bulk subspace. \cite{li2018measuring} and \cite{gressmann2020improving} use projections to random subspaces to get around this, and \cite{li2022low} uses a carefully chosen subspace to improve DNN training.

Another strategy to make sampling more efficient involves running a diffusion in a latent space \cite{vahdat2021score,rombach2022high}. However, one must train a variational autoencoder to perform the embedding. This additional step has made it difficult to give proper theoretical bounds on the mixing time of the resulting diffusion method, although some results exist \cite{tzen2019theoretical}. Additionally, the dimension of latent spaces is typically still high.

Finally, random projections have been used in the study of Wasserstein space, particularly in the study of Sliced Wasserstein Distances \cite{bonneel2015sliced}. These offer a scalable way of estimating distances between high-dimensional distributions and have associated gradient flows \cite{bonet2022efficient}. We note that this path is distinct from the one we take here: \emph{\cite{bonet2022efficient} developed gradient flows of the space of probability measures equipped with a different metric (the sliced Wasserstein distance). We seek to approximate (preconditioned) Wasserstein gradient flows themselves with random projections.}

\section{The Subspace Langevin Algorithm}
\label{sec:methods}

As discussed in the introduction, variants of subspace descent have been used to scale optimization methods in high-dimensional settings. Our focus in this paper is to extend subspace descent to optimization over Wasserstein space by focusing first on the particular problem of minimizing the KL divergence between a proposal and a target measure. 

We begin by outlining how to define subspace gradient flows in Wasserstein space and give a simple proof of convergence for them in Section \ref{subsec:continuoustime}. After this, Section \ref{subsec:slmc} specializes this by defining SLMC as a discretization of a specific subspace flow in Wasserstein space. The resulting algorithm and its theoretical analysis are the main contributions of our work.

\subsection{Subspace Gradient Flows in Wasserstein Space}
\label{subsec:continuoustime}

In this section, we will outline subspace gradient flows in Wasserstein space. After defining them, we extend the continuous time analysis of Euclidean subspace descent presented in Section \ref{subsec:euclcase} to give a quantitative convergence guarantee. We finish by discussing a specific function that is our primary object of interest. 

Concretely, we consider an abstract optimization problem over the space of measures,
\[
    \min_{\mu} \cG(\mu). 
\]
We assume that $\cG$ is $\alpha$-strongly geodesically convex over Wasserstein space,
\[
    \cG(\nu)  \geq \cG(\mu) + \langle \nabla_{W_2} \cG(\mu), \Log_\mu \nu \rangle_\mu + \frac{\alpha}{2} W_2^2(\mu, \nu),
\]
where $\alpha > 0$ and $\|\cdot\|_{\mu}$ is the norm defined by the inner product on $L^2(\mu)$. This implies that $\cG$ has a unique minimizer $\mu_\star$.
This implies that $\cG$ satisfies a PL inequality over Wasserstein space,
\begin{equation}\label{eq:wasspl}  
    \cG(\mu) - \cG(\mu_\star) \leq \frac{1}{2\alpha} \|\nabla_{W_2} \cG(\mu)\|_{\mu}^2.
\end{equation}

In continuous time, we can formulate a projected flow as
\[
    \partial_t \mu_t - \nabla \cdot (\mu_t \bP_t \nabla_{W_2} \cG(\mu_t))=0,
\]
for a matrix-valued function $\bP_t(x): \R^d \to \R^{d \times d}$, where we assume that $\bP_t$ is low-rank and positive semidefinite for all $t$ and $x$. In the following, we will suppress the dependence of $\bP_t$ on $x$. For any $\cG$, we have
\[
    \partial_t  \cG(\mu_t) = -\langle \nabla_{W_2} \cG(\mu_t), \bP_t \nabla_{W_2} \cG(\mu_t) \rangle_{\mu_t}.
\]
Since $\bP_t$ is positive semidefinite, this is nonpositive, and by Lyapunov arguments one can argue that $\lim \inf_{t} \langle \nabla_{W_2} \cG(\mu_t), \bP_t \nabla_{W_2} \cG(\mu_t) \rangle_{\mu_t} = 0$ \cite{polyak2017lyapunov}.

Using the PL inequality, we can derive a more quantitative convergence bound. 
\begin{align}\label{eq:subplineq}
    \partial_t   (\cG(\mu_t) -\cG(\mu_\star))&= -\langle \nabla_{W_2} \cG(\mu_t), \bP_t \nabla_{W_2} \cG(\mu_t) \rangle_{\mu_t} \\ \nonumber
    &= -\frac{\langle \nabla_{W_2} \cG(\mu_t), \bP_t \nabla_{W_2} \cG(\mu_t) \rangle_{\mu_t}}{\|\nabla_{W_2}\cG(\mu_t)\|_{\mu_t}^2}\|\nabla_{W_2}\cG(\mu_t)\|_{\mu_t}^2 \\ \nonumber
    &\leq -2\alpha\gamma_t (\cG(\mu_t) -\cG(\mu_\star)),
\end{align}
where $\gamma_t$ is a continuous function such that
\begin{align}
    0 \leq \gamma_t = \frac{\langle \nabla_{W_2} \cG(\mu_t), \bP_t \nabla_{W_2} \cG(\mu_t) \rangle_{\mu_t}}{\|\nabla_{W_2}\cG(\mu_t)\|_{\mu_t}^2}.
\end{align}
It can be thought of as the amount of alignment of $\nabla_{W_2} \cG(\mu_t)(\cdot)$ with the low-rank function $\bP_t(\cdot)$. We apply Gr\"onwall's inequality to \eqref{eq:subplineq}
\begin{equation}\label{eq:wassprojgronwall}
    \cG(\mu_t) -\cG(\mu_\star) \leq \exp(-2\alpha\int_0^t \gamma_s ds) (\cG(\mu_0) - \cG(\mu_\star)).
\end{equation} 
Therefore, we can feasibly get exponential convergence if $\bP_t$ is well-adapted so that $\gamma_t$ is uniformly bounded below. It is an interesting question for future work to examine such flows in more detail for various functions $\cG$. However, for our work, we will focus on the specific case of the KL  divergence, $\cG = \cF$, as defined in \eqref{eq:kl}. The following lemma establishes geodesic convexity properties of the KL divergence. For a detailed discussion, see \cite{chewi2024log}. 
\begin{lem}
    The function $\cF$ is $\alpha$-strongly geodesically convex iff $V$ is $\alpha$-strongly convex. As a result, if $V$ is $\alpha$-strongly convex, then $\cF$ satisfies the PL inequality \eqref{eq:wasspl}, and the Wasserstein gradient flow exhibits exponential convergence. Furthermore, if $\bP_t$ is continuous, the projected gradient flow $\partial_t \mu_t - \nabla \cdot (\mu_t \bP_t \nabla_{W_2} \cF(\mu_t))=0$ exhibits the convergence bound \eqref{eq:wassprojgronwall}.
\end{lem}


\subsection{Subspace Langevin Monte Carlo}
\label{subsec:slmc}

In the previous section, we discussed PLMC, which generalizes LMC and is a special case of mirror LMC. A disadvantage of PLMC is that we must multiply a $d \times d$ positive definite matrix $\bA_k$ by the gradient, which makes the per-iteration complexity and memory significant in high dimensions. 

To make this method more efficient, we consider a subspace version in Wasserstein space, analogous to the subspace gradient flow and gradient descent in Euclidean space. 
For a sequence of low-rank matrices $\bP_k$, the discrete-time process approximating \eqref{eq:plmc} is
\begin{equation}\label{eq:slmc}
    X_{k+1} = X_k -  h_k\bP_k \nabla V( X_k) + \sqrt{2h_k} \bP_k^{1/2} \xi_k,
\end{equation}
which we call the \emph{Subspace Langevin Monte Carlo} algorithm ({SLMC}). For our later analysis, we assume that $\bP_k$ is a random low-rank approximation of $\bA_k$. In particular, we assume that $\bP_k$ is a random eigenblock of $\bA_k$.
\begin{defn}\label{def:eigenblock}
    We say that $\bP_k$ is a rank $r$ eigenblock of $\bA_k$ if it can be written as 
    \[
        \bP_k = \bW_{ik} \bD_{ik} \bW_{ik}^T,
    \]
    where $\bW_{ik} \in \R^{d \times r}$ contains distinct eigenvectors of $\bA_k$ corresponding to the eigenvalues along the diagonal of $\bD_{ik} \in \R^{r \times r}$. In particular, this means we can write the eigenvalue decomposition of $\bA_k = \bW_k \bD_k \bW_k^T$ as
    \[
    \bA_k = \begin{bmatrix}
        \bW_{1k} & \cdots & \bW_{\lceil d/r \rceil k}
    \end{bmatrix}
    \begin{bmatrix}
        \bD_{1k} && \\
        &\cdots & \\
        && \bD_{(d/r)k}
    \end{bmatrix}
    \begin{bmatrix}
        \bW_{1k} & \cdots & \bW_{\lceil d/r \rceil k}
    \end{bmatrix}^T.
    \]
    The last block is allowed to be of variable rank when $r$ does not divide $d$, but for simplicity, we will assume that $d \mod r = 0$
\end{defn}

When $\bA_k$ is the identity and we assume that $\bP_k$ has one nonzero entry along the diagonal, we can recover RCLMC \cite{ding2021random}. Another specific case of SLMC is Block-Coordinate Langevin Monte Carlo, where $\bP_k$ is a projection onto a block of coordinates. To our knowledge, this method has not yet been theoretically studied and is a special case of our later analysis. We emphasize that using a general projection structure allows one to update with respect to a block of coordinates in a new basis at each iteration, which provides flexibility in adapting to the geometry of the target distribution.

\section{Theoretical Analysis of Subspace Langevin Monte Carlo}
\label{sec:theory}

In this section, we conduct a discrete-time analysis of SLMC. Our analysis of SLMC is inspired by previous analyses of mirror LMC and RCLMC \cite{ahn2021efficient,ding2021random}. 

We note that the primary goal of this section is to have a clear picture of the computational complexity of the discussed methods. We measure the complexity in terms of calls to a directional derivative oracle. Therefore, for example, computing the gradient of a function $V: \R^d \to \R$ requires $d$ calls to this oracle, while computing the derivative along $r$ directions requires $r$ calls. This measure of computational cost will be central to our comparison of SLMC with existing methods.

In the following sections, we will give our main theoretical result for SLMC and put it into a broader context. We begin in Section \ref{subsec:coreassump} by discussing the core assumptions shared in the analysis of PLMC and SLMC. Then, in Section \ref{subsec:plm_conv}, we specialize the result of \cite{ahn2021efficient} to give a complexity bound for PLMC. Section \ref{subsec:slmc_conv} presents our convergence result for SLMC, which requires two additional assumptions. We finish in Section \ref{subsec:discussion} by comparing the theoretical complexity of SLMC to LMC, RCLMC, and PLMC.

\subsection{Core Assumptions}
\label{subsec:coreassump}

We begin by outlining some core assumptions shared in the analysis of PLMC and SLMC. 

Our analysis relies on the following notion of relative strong convexity and smoothness.
\begin{defn}\label{def:relscsm}
  The potential $V$ is $m$-relatively strongly convex and $M$-relatively smooth with respect to $\|\cdot\|_{\bB}$ if
  \begin{equation}\label{eq:relcvxsm}
    V(x) + \langle \nabla V(x), y-x\rangle + \frac{m}{2} \|y-x\|_{\bB}^2 \leq V(y) \leq V(x) + \langle \nabla V(x), y-x\rangle + \frac{M}{2} \|y-x\|_{\bB}^2.
  \end{equation}
\end{defn}
When $V$ is twice differentiable, this is equivalent to the condition $m \bB \preceq \nabla^2 V \preceq M \bB$, and these are special cases of the conditions discussed in \cite{lu2018relatively}. We make the following assumption on our sequence of preconditioners $\bA_k$.
\begin{assump}\label{assump:relscsm}
  For all $k \in \N$, the potential $V$ is $m$ relatively strongly convex and $M$ relatively smooth with respect to $\|\cdot\|_{\bA_k^{-1}}$.
\end{assump}

The following remark discusses how relative strong convexity and smoothness can improve the conditioning in Gaussian sampling.
\begin{rem}\label{rem:gaussrel}
Here, the case where $\pi$ is a Gaussian distribution and $V$ is quadratic is illuminating. If $\bA_{k} = \bI$, then Assumption \ref{assump:relscsm} holds when $V$ is $m$-strongly convex and $M$-smooth. Now suppose $V(x) = \frac{1}{2} x^T \bSigma^{-1} x$, in which case $\pi$ is a centered Gaussian distribution. If we take $\bA_k = \bSigma$, then, since $\nabla^2 V = \bSigma^{-1}$, Assumption \ref{assump:relscsm} trivially holds with $m=M=1$. This example illustrates how appropriately adapting the preconditioner can completely alleviate the poor conditioning of $V$, resulting in an optimal relative condition number.
\end{rem}

Since we allow for a time-varying preconditioner $\bA_k$, we assume it is bounded and does not change too quickly. 
\begin{assump}\label{assump:precondchange}
  For the sequence of measures $\mu_{k} = \Law(X_k)$ defined in \eqref{eq:slmc}, we assume that the sequence of $\bA_k$ is chosen so that
    \begin{align}
        W_{2, \bA_k^{-1}}^2(\mu_{k}, \pi) \leq W_{2, \bA_{k-1}^{-1}}^2(\mu_{k}, \pi) + O(h_k^2),
    \end{align}
    where $h_k$ is the step size in \eqref{eq:slmc}.
    We further assume that $\bA_k \preceq C \bI$ for all $k \in \N$.
\end{assump}
This bound states that the $W_{2,\bA_{\cdot}}$ distance between $\mu_k$ and $\pi$ cannot change that much when the preconditioner is changed.

Since using a spatially dependent preconditioner requires an extra correction \cite{ma2015complete} and creates extra dependencies in our analysis, we must assume how the preconditioners $\bA_k$ can depend on the particles $X_1, \dots, X_k$.
\begin{assump}\label{assump:preconddep}
For a sequence of preconditioners $\bA_k$ corresponding to the Markov chain $X_k$, for $k = 1, 2, \dots$, $\bA_j$ is allowed to depend on the distributions of the particles, $\Law(X_i)$, as well as the distributions of their derivatives, $\Law(\nabla^j V(X_i))$, for $i=1, \dots, k$, $j=1, \dots$.
\end{assump}

\subsection{Convergence of PLMC}
\label{subsec:plm_conv}

As a warmup, we recall a proof of convergence for PLMC that follows the arguments given in \cite{ahn2021efficient}. In this work, the authors prove convergence of mirror LMC under the assumptions of relative strong convexity and smoothness.

We begin with a simple lemma that shows the exponential contraction of \eqref{eq:pld} for a fixed preconditioner.
\begin{lem}\label{lem:contract}
    Let $Z_t$ and $Z_t'$ be two copies of the diffusion \eqref{eq:pld} for a fixed $\bA_t = \bA$ as well as the same Brownian motion. Assume that $V$ and $m$-relatively strong convex with respect to $\|\cdot\|_{\bA^{-1}}$. Then, the following contraction holds
    \[
    \|Z_t - Z_t'\|_{\bA^{-1}}^2 \leq \exp(- m t)  \|Z_0 - Z_0'\|_{\bA^{-1}}^2.
    \]
\end{lem}
\begin{rem}
Note that if we optimally couple $Z_0$ and $Z_0'$, this becomes an exponential contraction in the $W_{2,\bA^{-1}}$ distance. Since the Wasserstein distance is an expectation, we could work with a weaker condition than that in Assumption \ref{assump:relscsm} by examining the proof of Lemma \ref{lem:contract}. In particular, we could instead require that relative strong monotonicity of $\nabla V$ (a consequence of relative strong convexity) holds in expectation along the flows $\Law(Z_t)$ and $\Law(Z_t')$. An interesting line of future work could study this assumption in more detail.
\end{rem}


While this lemma can prove continuous time convergence of the preconditioned Langevin diffusion \eqref{eq:pld} for a fixed preconditioner, it is also an essential piece of the proof of convergence of PLMC. The following theorem states a convergence bound for PLMC. Its proof mostly follows Theorem 2 of \cite{ahn2021efficient}, with the addition of the varying preconditioner $\bA_k$. We include a simplified proof of this theorem in Appendix \ref{app:plmc_conv}.
\begin{thm}\label{thm:plmc_conv}
    Suppose that Assumptions \ref{assump:relscsm}, \ref{assump:precondchange}, and \ref{assump:preconddep} hold, and PLMC is run with constant step size $h_k = h$. Then, PLMC achieves the error bound
    \begin{equation}
        W_{2, \bA_N^{-1}}(\mu_{N}, \pi) \leq (1-m h)^{N/2} W_{2, \bA_0^{-1}}(\mu_{0}, \pi) + O( \sqrt{\frac{M}{m} d h}).
    \end{equation}
\end{thm}

It is possible to translate from $W_{2, \bA_N^{-1}}$ bounds to $W_2$ bounds at the cost of a factor $\|\bA_N\|_2$ (which is bounded by a constant from Assumption \ref{assump:precondchange}). By choosing an appropriate $h$, we find the total complexity in terms of directional derivative computations to achieve $\epsilon$ error to be
\begin{equation}\label{eq:itcompplmc}
    N_{\mathrm{PLMC}} = O\left(\frac{d^2 \kappa_{\rel}}{\epsilon^2 m } \log \frac{W_{2,\bA_0^{-1}}(\mu_0, \pi) }{\epsilon} \right),
\end{equation}
where $\kappa_{\rel} = M/m$ is the relative condition number.
Again, the extra factor of $d$ comes from needing to compute $d$-directional derivatives at each iteration. Each iteration also requires multiplying the gradient by a $d \times d$ matrix, which incurs an additional $d^2$ complexity.

\subsection{Convergence of SLMC}
\label{subsec:slmc_conv}

We now present our main theoretical convergence result for SLMC. It relies on the following assumptions. First, we have an assumption on how the matrices $\bP_k$ are generated.
\begin{assump}\label{assump:proj}
Following Definition \ref{def:eigenblock}, we assume the preconditioner $\bA_k$ is partitioned into eigenblocks $\bW_{ik} \bD_{ik} \bW_{ik}^T$, for $i=1, \dots, \lceil d/r \rceil$. At iteration $k$, we choose an index $i$ with probability $\phi_{ik}$ and set $\bP_k = \bW_{ik} \bD_{ik} \bW_{ik}^T$ and $h_k = h/\phi_{ik}$, where $h>0$ is fixed. We thus see that $h\bA_k = \E h_k \bP_k$.
\end{assump}
Note that this procedure is preconditioned block coordinate descent after applying a rotation $\bW_k^T$. In the case of $\bW_k =\bD_k = \bI$ and $r=1$, this is just the standard RCLMC \cite{ding2021random}.

Our final assumption allows smoothness to adapt to the update directions chosen at each iteration. 
\begin{assump}\label{assump:sm}
    The function $V$ is $M_k(\bU)$ directionally smooth with respect to $\bA_k$ along any linear subspace spanned by $\bU \in O(d,r)$. That is, for all $y = x + \bU \bU^T \delta$, 
    \[
        \|\bU\bU^T (\nabla V(x) - \nabla V(y))\|_{\bA_k} \leq M_k(\bU) \|\bU \bU^T(x-y)\|_{\bA_k^{-1}}.
    \] 
    We further assume that 
    \[
    \|\nabla V(x) - \nabla V(y))\|_{\bA_k} \leq M_k \|x-y\|_{\bA_k^{-1}}, \ \forall \ x,y \in \R^d,
    \]
    which then implies the upper bound in \eqref{eq:relcvxsm}. It is not hard to show that $M_k(\bU) \leq M_k$ for all $\bU \in O(d,r)$.
\end{assump}


With this final assumption, we are now ready to state our main theorem.
\begin{thm}\label{thm:conv}
    Suppose Assumptions \ref{assump:relscsm}, \ref{assump:precondchange}, \ref{assump:preconddep}, \ref{assump:proj}, and \ref{assump:sm} hold, and additionally that $h \leq \min_i \phi_{ik}/M_k$ for all $k$. Then we have the error bound
    \begin{align*}
        W_{2,\bA_N^{-1}}(\mu_N, \pi) &\lesssim \exp(-\frac{h m N}{4}) W_{2,\bA_0^{-1}}(\mu_0, \pi) +  \sqrt{\frac{1}{m}\sum_{j=1}^N (1-\frac{h m}{2})^{N-j} \sum_{i=1}^{d/r} [  \frac{r  h^2 M_{ij}^2}{\phi_{ij}}] },
    \end{align*}
    where $M_{ij} = M_j(\bW_{ij})$ is the directional smoothness corresponding to the $i$th eigenblock at iteration $j$.
\end{thm}
The proof of this theorem is given in Appendix \ref{app:thmproof} and is similar to that of the main theorem of \cite{ding2021random} with a few twists. The proof follows a Wasserstein coupling style argument developed by \cite{dalalyan2019user}. The last term depends on how well the sequence of preconditioners can adapt to the relative smoothness condition over time.

We can simplify the above bound to get a clearer picture of the complexity of SLMC.
If $j$ is the index that maximizes $ \sum_i [  \frac{M_{ij}^2}{\phi_{ij}}] $ and we write $M_{ij} = M_i$, $\phi_{ij} = \phi_i$, $\kappa_{\rel,i} = M_i/m$, and $h \asymp \frac{\epsilon^2}{r \sum_i  \kappa_{\rel,i}^2/\phi_{i}}$ in Theorem \ref{thm:conv}, we see that the complexity of SLMC is
\begin{equation}\label{eq:itcompslmc}
    N_{\mathrm{SLMC}} = O\left( \frac{r^2  \sum_{i=1}^{d/r}  \kappa_{\rel,i}^2/\phi_{i}}{\epsilon^2 m} \log \frac{W_2(\mu_0, \pi) }{\epsilon} \right).
\end{equation}
Note that there is an extra factor of $r$ rather than $d$ since we do not need to compute all directional derivatives at each iteration. We discuss and further compare all of these results in the next section. 

\begin{rem}
    Due to Assumption \ref{assump:preconddep}, we can allow the preconditioners to depend on the laws of the particles. To implement this in practice, we could compute preconditioners from systems of independent particles at each iteration, which would approximate preconditioners depending on $\mu_k = \Law(X_k)$. However, our analysis does not apply to subspace versions of adaptive algorithms like Adagrad Langevin, RMSProp Langevin, and Riemannian Langevin methods \cite{yu2024scalable}. This is because, in these methods, the preconditioner depends on the spatial variable $X$ itself rather than its law. It would be interesting to analyze subspace versions of these methods in future work.
\end{rem}

\subsection{Discussion}
\label{subsec:discussion}

In this section, we present a comparison of the various methods. We will compare SLMC to LMC, PLMC, and the RCLMC method of \cite{ding2021langevin}.

We first recall the complexity result of \cite{ding2021langevin}. Let $\beta_i$ denote the smoothness in the $i$th coordinate direction, as in \cite{ding2021random}, and suppose that $V$ is $\alpha$-strongly convex. Denote $ \kappa_i = \beta_i / \alpha$ as the condition number along the $i$th direction with respect to the strong convexity parameter $\alpha$, and $ \kappa = \beta / \alpha$.
The complexity of RCLMC, in this case, is
\[
     N_{\sf RCLMC} = O\left( \frac{\sum_{i=1}^d \kappa_i^2/\phi_{i}}{\epsilon^2 \alpha} \log \frac{W_2(\mu_0, \pi)}{\epsilon} \right).
\]

Let us first compare SLMC and LMC. The most straightforward case occurs when $\bA_k = \bI$ for SLMC, and we use uniform random sampling, $\phi_{ik} = r/d$. In this case, relative strong convexity and smoothness reduce to normal strong convexity and smoothness, so $m = \alpha$ and $M=\beta$. If we use the worst case bound to set $\kappa_{\rel, i} = \kappa_i = \kappa$, then we see that SLMC is worse than LMC by a factor of $\kappa$. 
On the other hand, if we use general $\bA_k$ and take $\phi_{ik} = \kappa_{\rel, i} /\sum_j \kappa_{\rel, j}$, then we find that SLMC outperforms LMC once $r^2 \frac{(\sum_{i=1}^{d/r} \kappa_{\rel, i})^2}{m} \leq \frac{d^2 \kappa}{\alpha}$.
When the potential $V$ is highly skewed and $\bA_k$ is well-adapted to it ($\kappa_{\rel, i} \ll \kappa$ and $m \gg \alpha$), we see that there are definitive gains in performance for SLMC. 

We next focus on comparing SLMC and PLMC. This comparison shows that SLMC has the best bound out of existing processes in specific ill-conditioned examples. In particular, for SLMC to beat PLMC, we need
\[
   r (\sum_{i=1}^{d/r} \kappa_{\rel,i})^2\geq  d \kappa_{\rel}.
\]
Therefore, SLMC improves over PLMC when the directional conditioning bounds are better than the uniform conditioning bounds required by PLMC.

Finally, to compare SLMC to RCLMC, we note that our result is a strict generalization of RCLMC's. In particular, we recover their result as a subcase of ours but extend it to arbitrary block size and relative conditioning.

We summarize these complexity results in Table \ref{tab:comp}. As we can see, the results of different methods may be optimal in various settings.
\begin{table}[ht]
\centering
\begin{tabular}{|c| c c c c|} 
 \hline
   & LMC & PLMC & RCLMC & SLMC \\ \hline 
  Conditioning  & standard & relative & directional standard & directional relative \\ \hline
  Non-random & $\tilde O(\frac{d^2  \kappa}{\epsilon^2 \alpha } )$ &$\tilde O(\frac{d^2 \kappa_{\rel}}{\epsilon^2 m } )$ & -  & -\\ \hline 
  $\phi_{ik} = r/d$ &  -  & -  & $\tilde O(\frac{d \sum_{i=1}^{d}  \kappa_i^2}{\epsilon^2 \alpha} )$ & $\tilde O(\frac{dr \sum_{i=1}^{d/r} \kappa_{\rel,i}^2}{\epsilon^2 m} )$ \\ \hline 
  $ \phi_{ik} = M_{ik}/\sum_i M_{ik}$ & -  & - & $\tilde O(\frac{d (\sum_{i=1}^{d}  \kappa_i)^2}{\epsilon^2 \alpha} )$ & $\tilde O(\frac{dr (\sum_{i=1}^{d/r} \kappa_{\rel,i})^2}{\epsilon^2 m} )$  \\ \hline
\end{tabular}\vspace{.1cm}
\caption{Table comparing complexity bounds for LMC, PLMC, RCLMC, and SLMC measured in terms of total number of directional derivative computations. Our analysis of SLMC is strictly more general than that of RCLMC and allows for a method that can be much more efficient than vanilla LMC. The $\tilde O$ removes constants and dependence log factors, such as $\log W_2(\mu_0, \pi)/\epsilon$.}
\label{tab:comp}
\end{table}

RCLMC and SLMC have the additional benefit of significantly lower memory usage than LMC and PLMC. In particular, these methods do not need to store the full gradient at each iteration. Furthermore, SLMC does not need to store the full $d \times d$ preconditioner at each iteration, requiring $O(d^2)$ memory. This is prohibitive in high dimensions. This is essential if one wants to efficiently scale adaptive Langevin methods to high dimensions, as is done in the recent works on training large language models \cite{feinberg2024sketchy,zhao2024galore,liang2024memory}.

\section{Experiments}
\label{sec:exp}

We perform three experiments that demonstrate the practical efficiency and adaptability of SLMC. The first example examines the choice of fixed preconditioning matrices and rank on an ill-conditioned Gaussian distribution. The second experiment explores the use of subspace approximations to Hessian preconditioners in Bayesian logistic regression. The third and final experiment demonstrates the effectiveness of subspace approximations of adaptive preconditioners on a funnel distribution. 

While our current theory does not cover the last example, it illustrates the effectiveness of our method when incorporated with adaptive preconditioners. We believe studying memory-efficient adaptive sampling methods for large-scale non-log-concave sampling problems is a promising direction for future work.

\subsection{Ill-Conditioned Gaussian Sampling}

In our first experiment, we seek to sample from a specific ill-conditioned Gaussian distribution. This example is inspired by \cite{ding2021langevin} and explores the adaptability of SLMC in terms of choice of preconditioner and rank. In this experiment, we compare different variants of SLMC with LMC, PLMC, and RCLMC. Throughout, we will set up SLMC to have a fixed preconditioner, which, in effect, is a form of block-coordinate LMC. 

The target distribution $\pi$ is a centered Gaussian distribution in dimension $d=20$ with covariance $\bSigma$ defined as follows. For a random $5 \times 5$ Gaussian matrix $\bG$ and a $10 \times 10$ random orthogonal matrix $\bU$, we define a block diagonal structure
\[
    \bSigma^{-1} = \begin{bmatrix}
   \bU\begin{pmatrix}
        (\bG + 10 \bI_{5})(\bG + 10 \bI_{5})^T& \bzero \\ \bzero & \bI_{5}
    \end{pmatrix} \bU^T & \\ & \bI_{10}
\end{bmatrix}.
\]
In the following, we define a test function $\phi(x) = |\bone^T x|$, where $\bone$ is the ones vector, and compute the error as $\mathsf{Err} = \left| \frac{1}{N} \sum_i \phi(X_i) - \E_\pi \phi \right|$.
The initial particle generated as $X_0\sim N(1, \bI_{20})$, and for each method we take $N=20000$ steps. We will measure error versus complexity for all methods, which is measured in directional derivative computations.

Figure \ref{fig:basis} displays various experimental results on sampling from this target distribution.
First, the top left image explores no preconditioning, $\bA_k =\bI$. We see that SLMC can match the performance of LMC when the block size is adapted to the 10-dimensional block structure of $\bSigma^{-1}$ while having a reduced memory footprint. 

Second, on the top right, we explore simple preconditioning. Here, SLMC and PLMC have a diagonal preconditioner that is $\bI$ on the upper left $10 \times 10$ block and $10 \bI$ on the lower $10 \times 10$ block. From this, preconditioning enables fast initial convergence, especially when the block size is adapted to the covariance structure.

Third, in the lower left image of Figure \ref{fig:basis}, we run an experiment to demonstrate how the choice of basis can assist SLMC in converging faster. In particular, the subspaces are eigenspaces of $\bSigma$. Because of this, as is discussed in Remark \ref{rem:gaussrel}, the relative conditioning is much better for SLMC and exhibits faster convergence than LMC and RCLMC. 

Finally, in the last experiment in the lower right of Figure \ref{fig:basis}, we again use the identity preconditioner but change the orthogonal blocks from which we sample the basis for SLMC. SLMC rotated refers to conducting SLMC with blocks taken from $\diag(\bU, \bI_{10})$. Note that in this new basis, $\bSigma^{-1}$ now has an upper left $5 \times 5$ block that is not the identity, while the lower right block is now $\bI_{15}$. With $r=5$, the rotated version of SLMC performs better than the unrotated version since the choice of basis creates a $5$-dimensional block structure.

\begin{figure}
    \centering
    \includegraphics[width=0.4\linewidth]{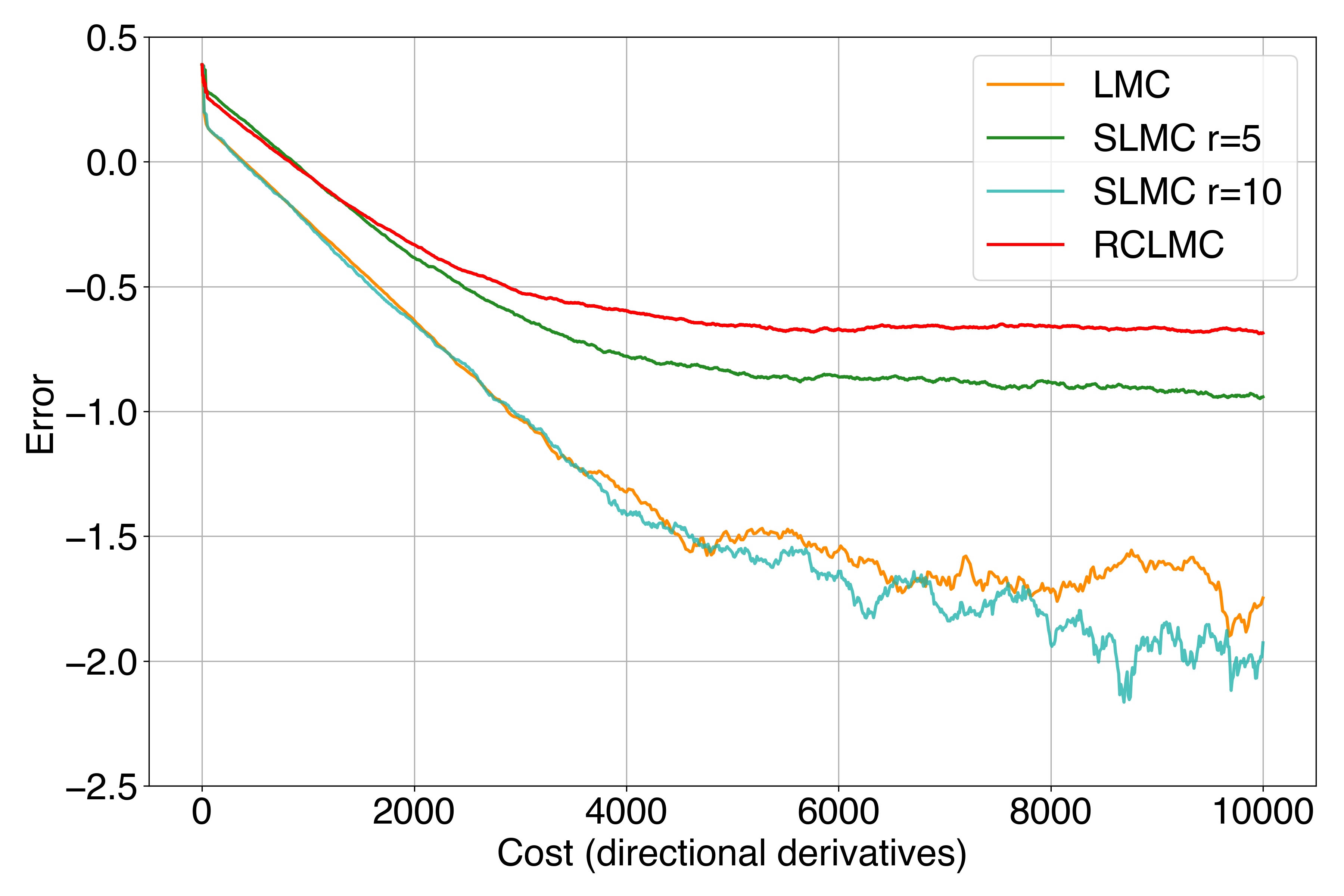}
    \includegraphics[width=0.4\linewidth]{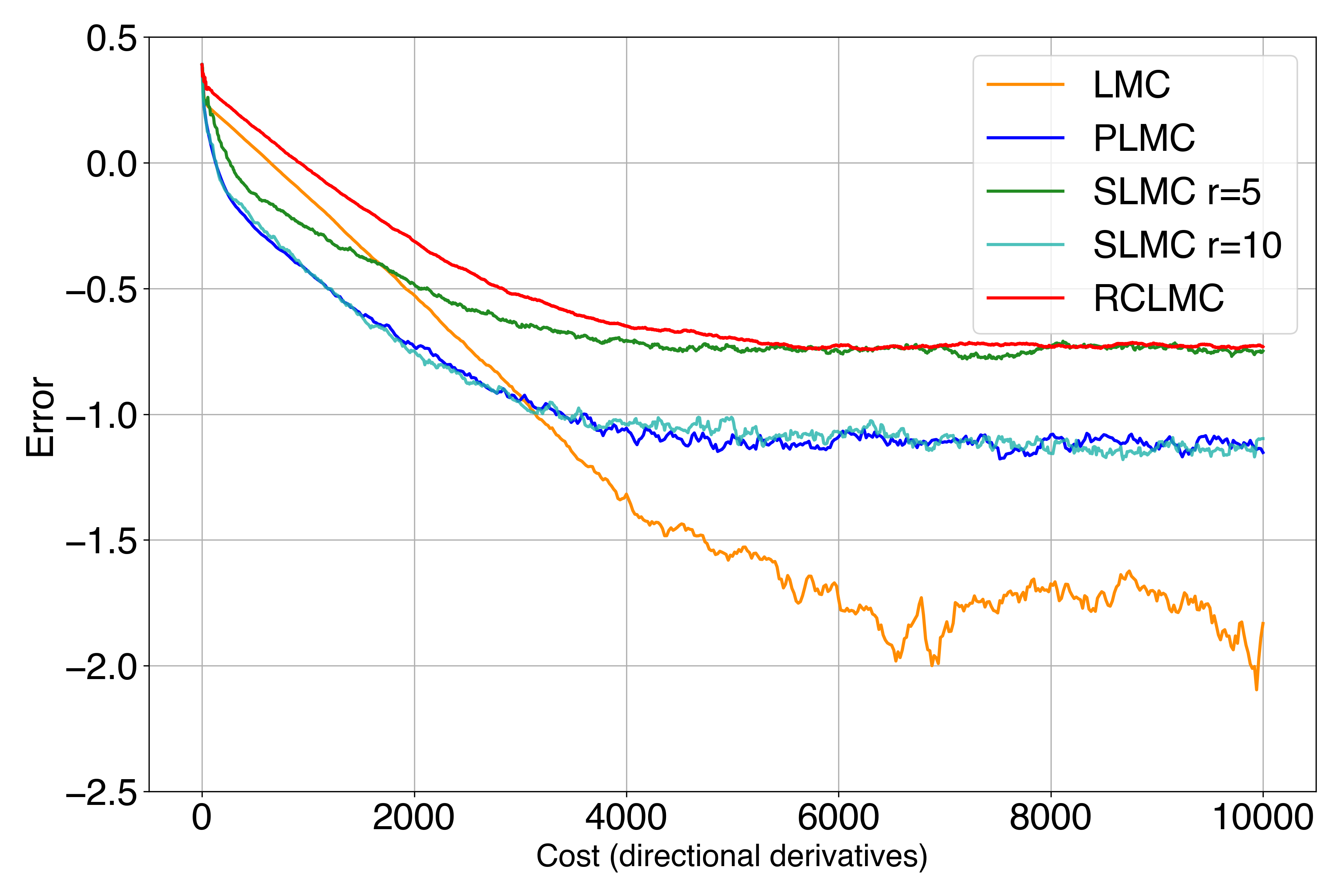}
    \includegraphics[width=0.4\linewidth]{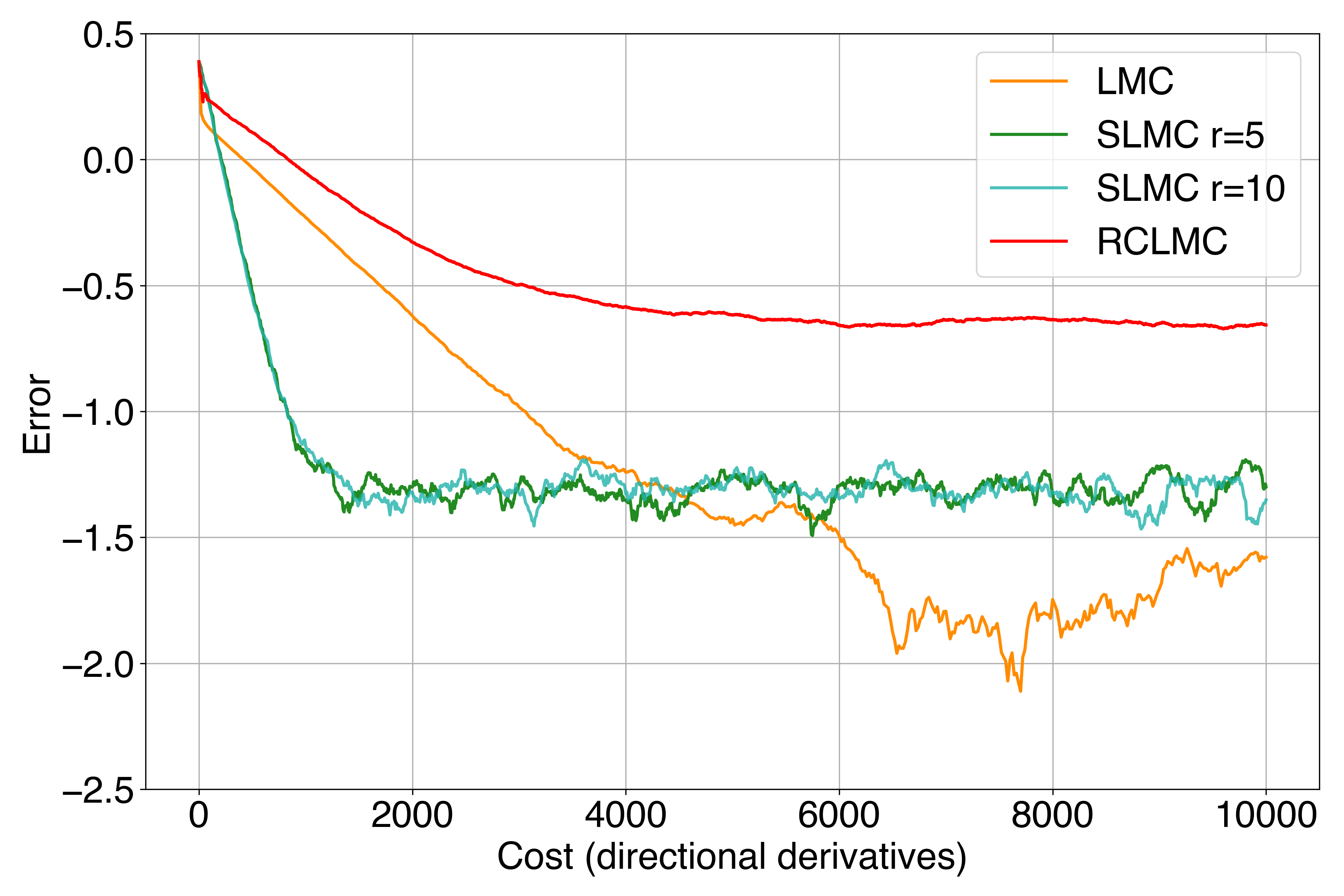}
    \includegraphics[width=0.4\linewidth]{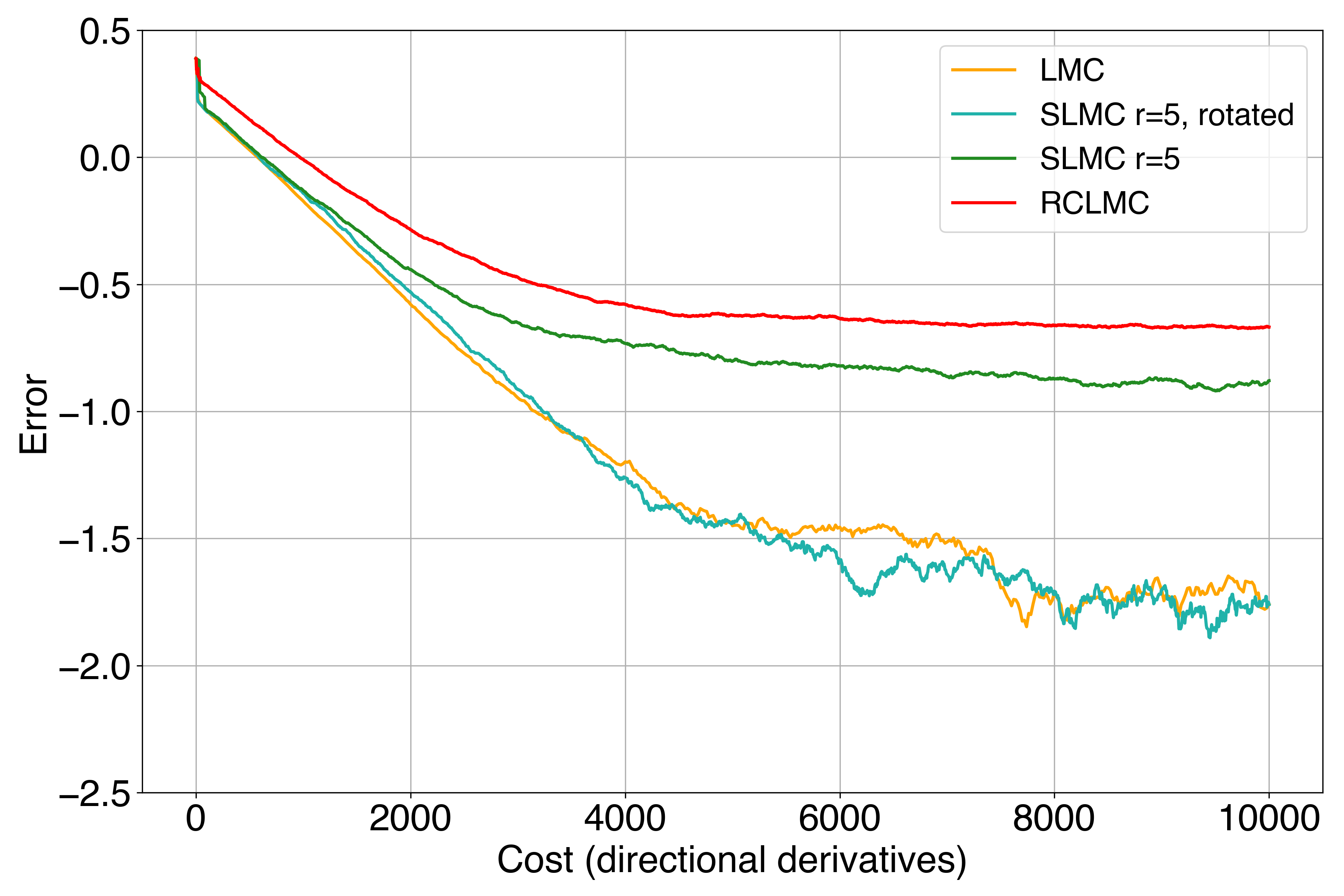}
    \caption{Experiments demonstrating the convergence of SLMC and PLMC compared to LMC and RCLMC for a diagonal preconditioner. \textbf{Top Left:} We set $h=0.01$ and use $\bA_k = \bI$. As we can see, setting the dimension equal to the upper left block allows SLMC to converge as fast as LMC. \textbf{Top Right:} We again set $h=0.01$ and let SLMC and PLMC have a diagonal preconditioner that is $\bI$ on the upper left $10 \times 10$ block and $10 \bI$ on the lower $10 \times 10$ block. As we can see, using a nonuniform step size allows for faster initial convergence since it is adapted to the covariance structure. This step size incurs a larger final bias. \textbf{Bottom Left:} SLMC blocks are taken from the eigenvalue decomposition of the covariance. The step size for SLMC is considered to be larger at $h=0.5$, while for LMC and RCLMC, it is $0.01$. The adaptation allows SLMC to converge rapidly at the onset while having a larger bias due to a larger effective step size. \textbf{Bottom Right:} SLMC now uses blocks from the rotation such that the top left block is $5 \times 5$. As we can see, the SLMC method with $r=5$ can now adapt to the blocks and converge faster than $r=10$.}
    \label{fig:basis}
\end{figure}

\subsection{Bayesian Logistic Regression}

For our next experiment, we explore a problem in Bayesian inference. In our example, we consider sampling from an ill-conditioned posterior in Bayesian logistic regression. 

This classic problem in Bayesian inference relies on observed data $(X_i, y_i)$, $i=1, \dots, n$, where the $X_i$ are the covariates and the $y_i$ are binary response variables. Assuming a Gaussian prior $\theta \sim N(0, \bSigma_{\theta})$, we arrive at the posterior
\[
    \pi(\theta | (X_1, y_1), \dots, (X_n, y_n)) \propto \exp\left[ \frac{1}{2} \theta^T \bSigma_{\theta}^{-1} \theta  + \sum_{i=1}^n y_i \theta^T X_i - \log(1+e^{\theta^T X_i)} \right].
\]
One way to infer properties of this posterior is to generate samples from it and then use these samples to approximate various moments of the posterior.

In our setup, we assume that $\theta$ and the $X_i$ are in $\R^2$. We set $n=100$ and generate i.i.d. data $X_i \sim N(0, \diag(10, .1))$, $y_i \sim \mathsf{Ber}(\mathsf{ilogit}((1, 1) X_i))$, where the logistic function is $\mathsf{ilogit}(z) = e^z/(1+e^z)$. We assume that the prior covariance is $\bSigma_{\theta} = \diag(1, 100)$. 

We run each method's $\ell=100$ independent chains to generate independent samples from the posterior distribution. We initialize the samples as $\theta_i^{0} \sim \mathsf{Unif}(-0.1, 0.1)$.

In this example, we set $r=1$ and we run SLMC with $\bA_k = \bI$ with two step sizes $h=0.1$ and $h=0.01$, as well as SLMC whose projections are generated as eigenblocks of $\bA_k = [\frac{1}{\ell} \sum_{j=1}^\ell \nabla^2 V(\theta_j^k)]^{-1}$. This choice is meant to approximate $[\E_{\theta \sim \mu_k} \nabla^2 V(\theta)]^{-1}$, where $\mu_k$ is the distribution of the random variable $\theta_{\cdot}^k$ after $k$ steps of the Markov chain. Our earlier theory applies to the preconditioner $[\E_{\theta \sim \mu_k} \nabla^2 V(\theta)]^{-1}$ since it is independent of the particles $\theta_i^k$.

Figure \ref{fig:logsamples} gives samples and contours for all samples generated by three SLMC methods. As we can see, at least visually, using the average Hessian preconditioner allows the method to adapt to the ill-conditioned shape of the posterior contours.

\begin{figure}
    \centering
    \includegraphics[width=0.32\linewidth]{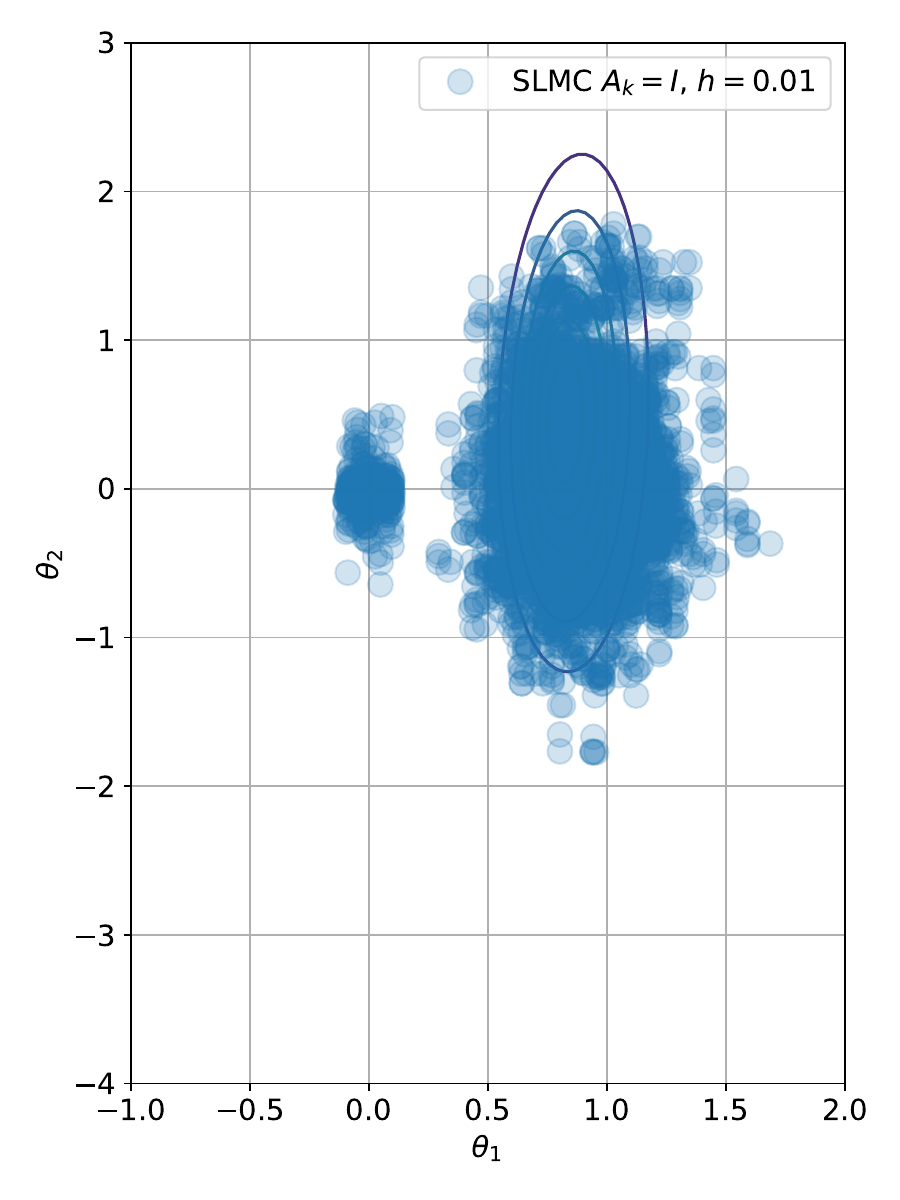}
    \includegraphics[width=0.32\linewidth]{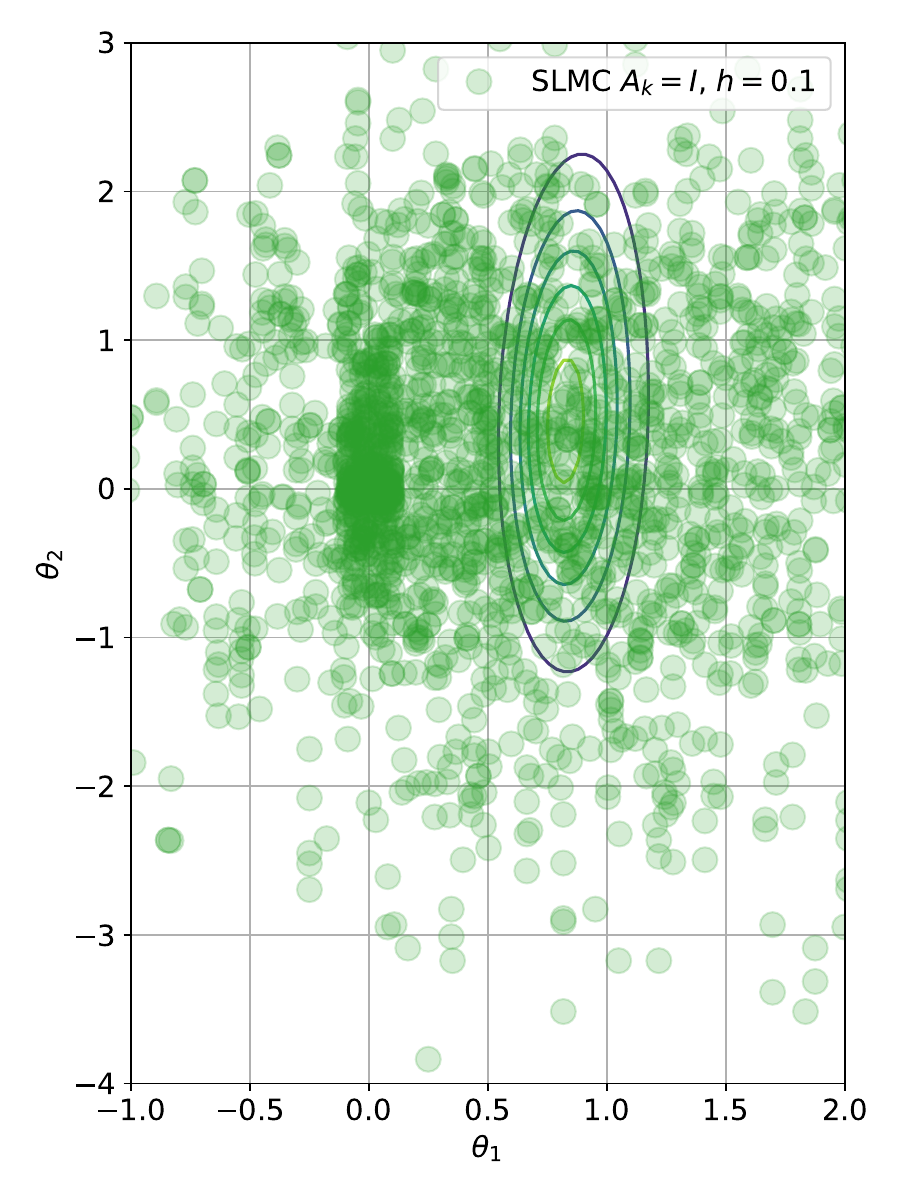}
    \includegraphics[width=0.32\linewidth]{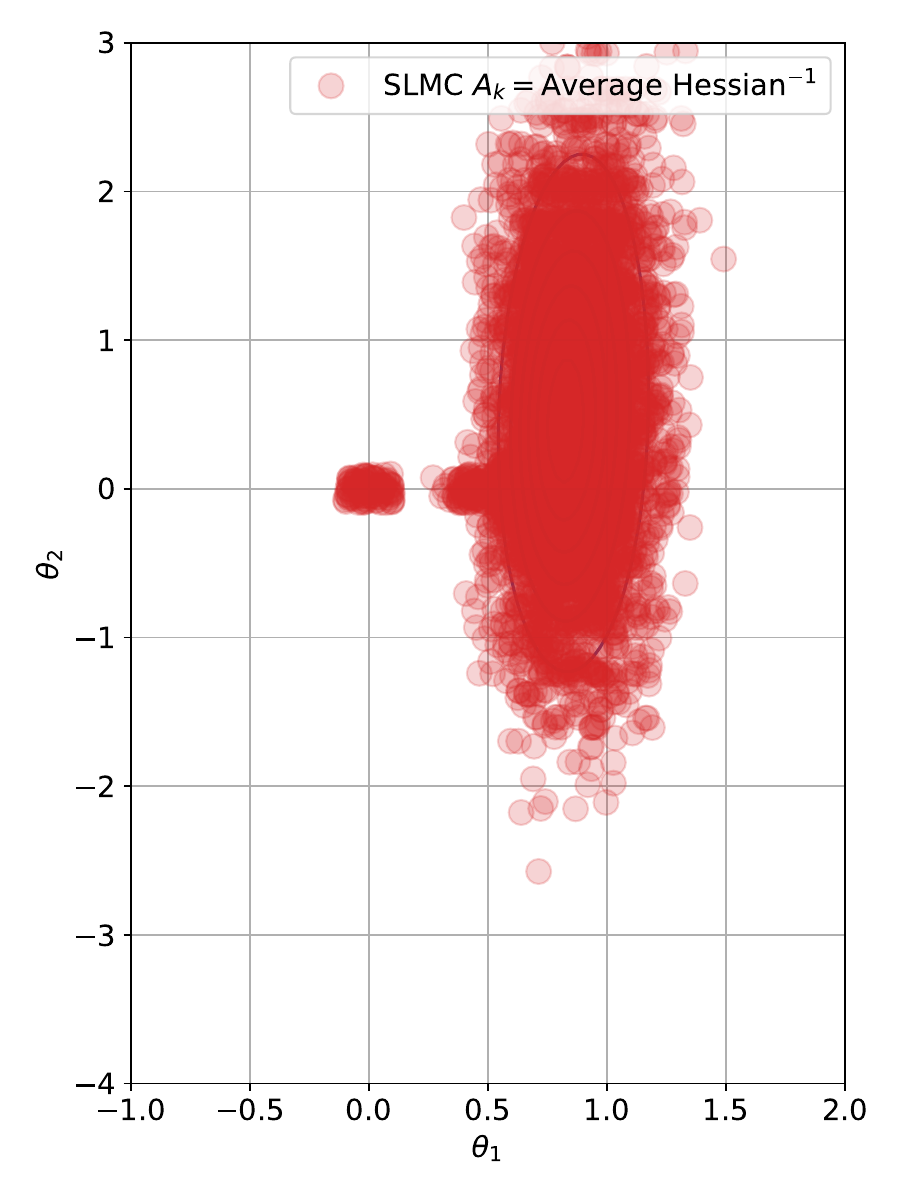}\\
    \includegraphics[width=0.32\linewidth]{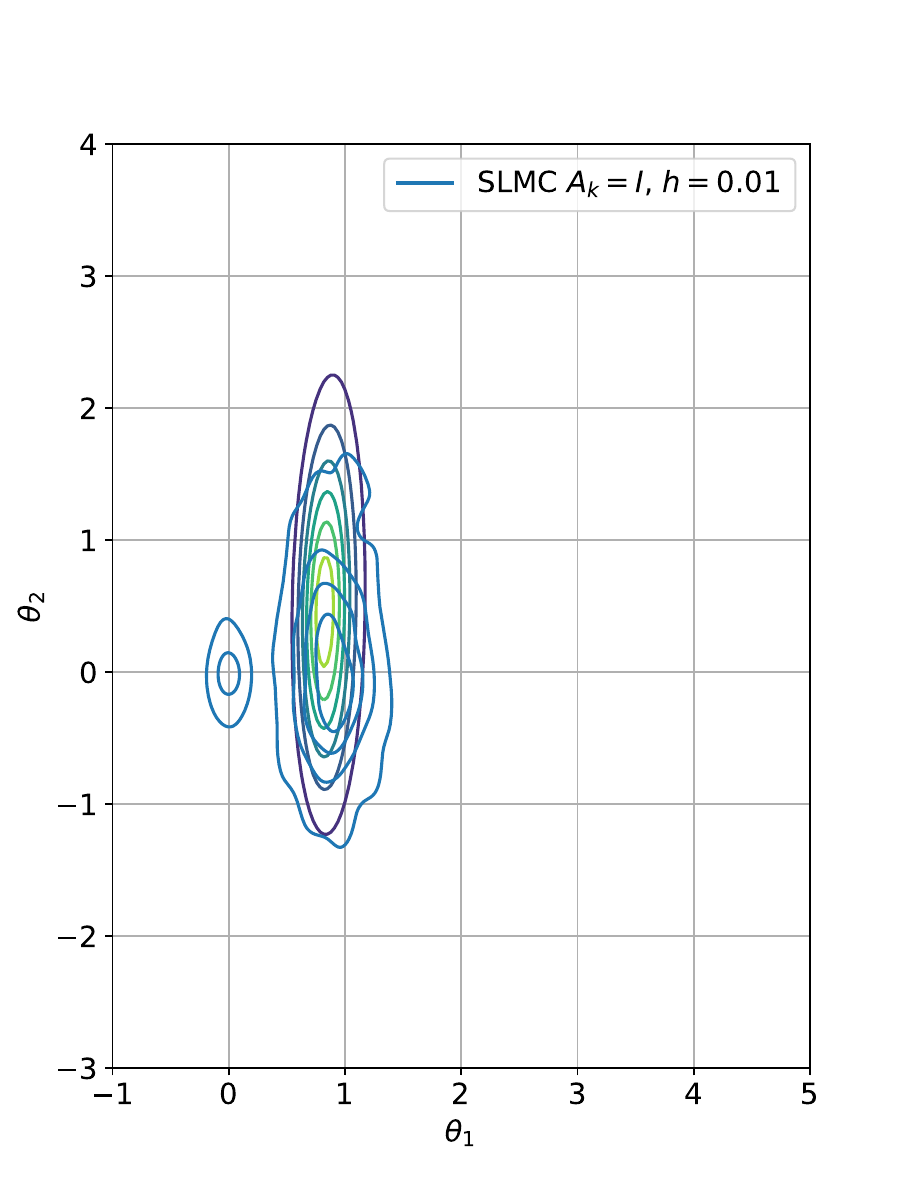}
    \includegraphics[width=0.32\linewidth]{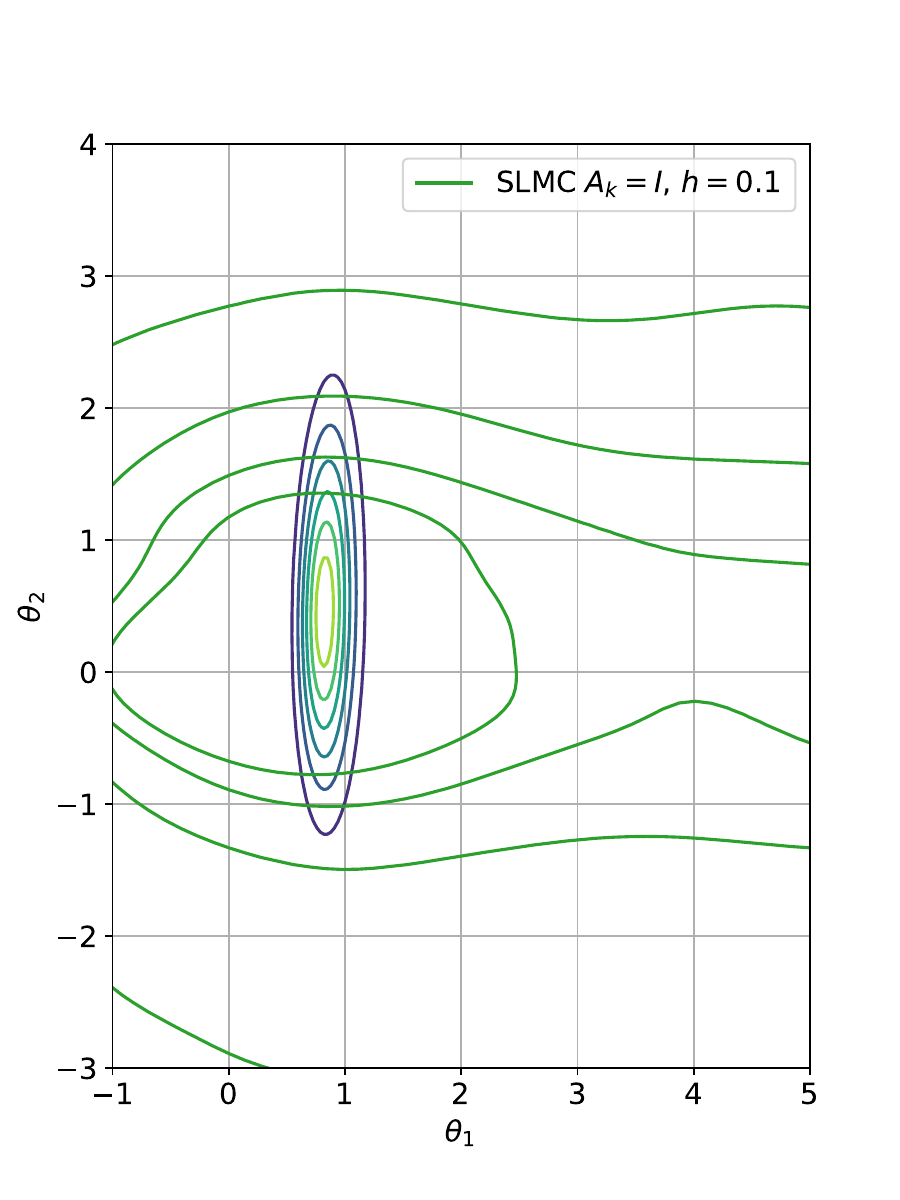}
    \includegraphics[width=0.32\linewidth]{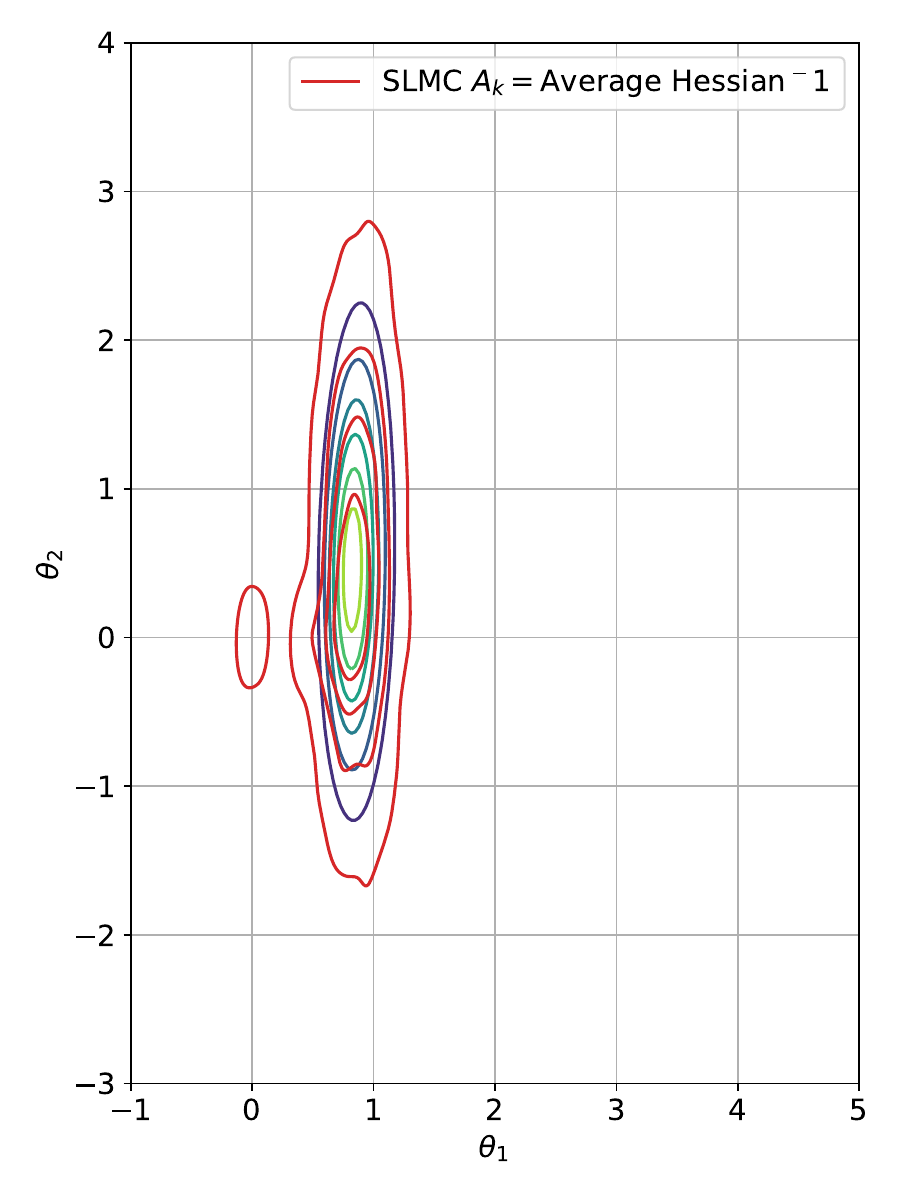}
    \caption{Samples and contours for Bayesian logistic regression experiment. \textbf{Left:} samples and contours when $\bA_k = \bI$ and $h=0.01$. \textbf{Middle:} samples and contours when $\bA_k = \bI$ and $h=0.1$ \textbf{Right:} samples and contours when $\bA_k = [\frac{1}{\ell} \sum_{j=1}^\ell \nabla^2 V(\theta_j^k)]^{-1}$ and $h=0.5$.}
    \label{fig:logsamples}
\end{figure}

To measure the quality of the samples over iterations, we run each method 50 times and compute the kernelized Stein discrepancy \cite{gorham2017measuring}. We use the be inverse multiquadric (IMQ) kernel with $\beta = -1/2$, and choose the bandwidth using the median heuristic as is done in \cite{liu2016stein,maurais2024sampling}. As we can see, the average Hessian preconditioning allows for better accuracy than the identity preconditioning.

\begin{figure}
    \centering
    \includegraphics[width=0.5\linewidth]{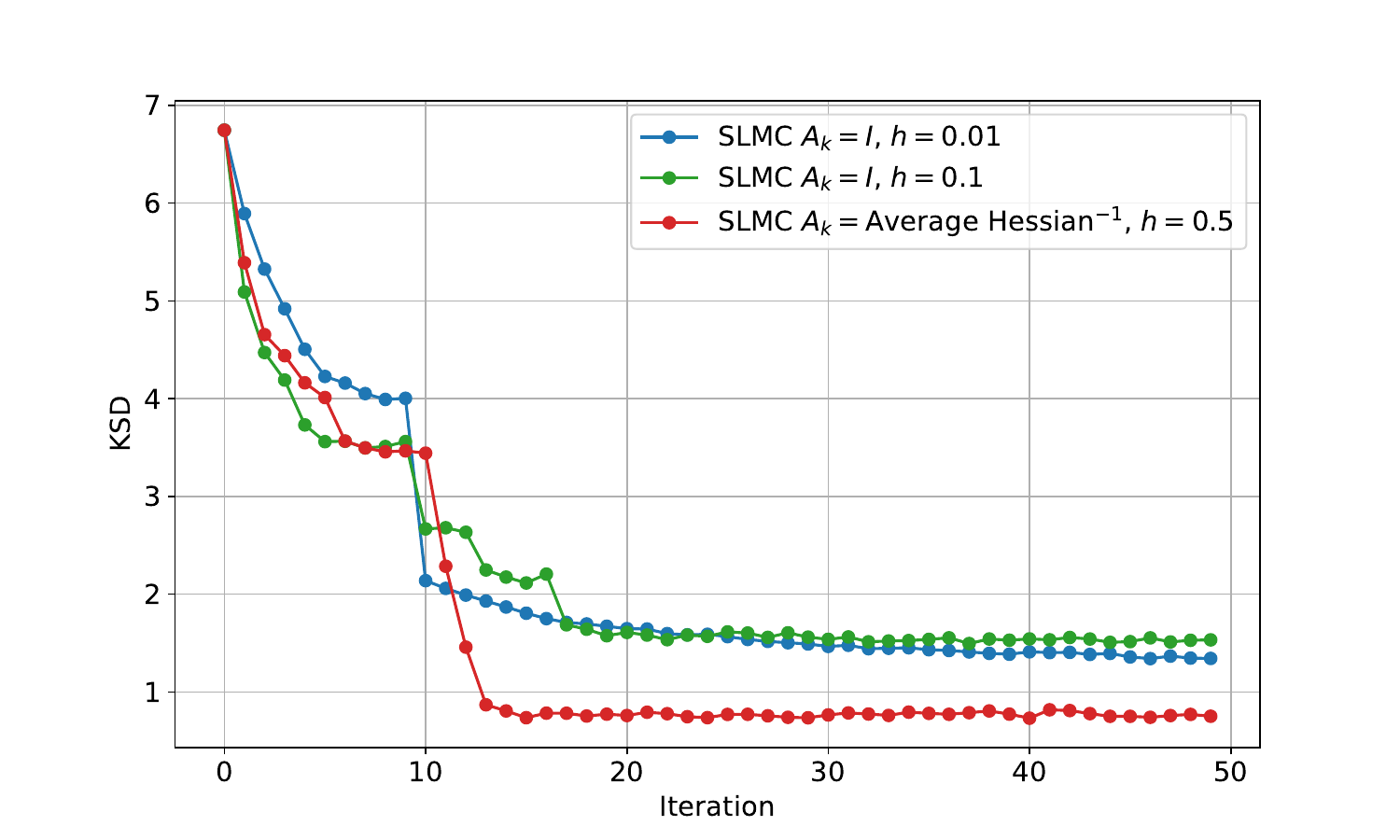}
    \caption{Kernelized Stein Discrepancy versus iterations for samples from Bayesian logistic regression experiment. The discrepancy values are averaged over 20 random generated examples.}
    \label{fig:enter-label}
\end{figure}

\subsection{Adaptive Methods and the Funnel Distribution}

Finally, we show how subspace Langevin methods can be used with adaptive preconditioning to develop efficient methods for sampling from complex distributions. We replicate the experiment from \cite{yu2024scalable} and compare preconditioners computed by RMSProp \cite{tieleman2012rmsprop} and Adagrad \cite{duchi2011adaptive,mcmahan2010adaptive} to the identity preconditioner in two settings. First, we try to sample from the standard funnel distribution with these methods, and then we try to sample from a rotated version of the funnel using these methods. We measure sample quality by qualitatively looking at the marginal along the $y$-axis for the generated samples.

In the following, we assume that we observe the gradients exactly at every iteration.
For the implementation of RMSProp and Adagrad, we initialize them at 0 and iteratively update the preconditioners as
\[
    \bA_k^{\text{RMSProp}} = [.99(\bA_{k-1}^{\text{RMSProp}})^{-2} + .01 \diag(\nabla V(\theta_k)_1^2, \nabla V(\theta_k)_2^2)]^{-1/2},
\]
\[
    (\bA_k^{\text{Adagrad}}) = [.99(\bA_{k-1}^{\text{Adagrad}})^{-2} + .01 \nabla V(\theta_k) \nabla V(\theta_k)^T]^{-1/2},
\]
where $\theta_k$ is the $k$th point in the Markov chain and $\nabla V(\theta_k)_j$ is the $j$th coordinate of the gradient.

Figure \ref{fig:funnel} shows the results of running subspace Langevin with the identity and diagonal RMSProp preconditioners for the standard funnel distribution of \cite{yu2024scalable}. As we can see, the RMSProp better adapts to the geometry of the funnel distribution. Adagrad also has some adaptation, but it is slightly less accurate because the diagonal preconditioning of RMSProp encodes a bias that aligns with the symmetries of the distribution and the marginal we wish to match.

Figure \ref{fig:funnelR} repeats this experiment, except now we rotate the funnel by precomposing it with the rotation
\[
    \bW = \begin{bmatrix}
        \sqrt{3}/2 & 1/2 \\ -1/2 & \sqrt{3}/2
    \end{bmatrix}.
\]
Now we see that since the axes are no longer aligned with the coordinate axes, the Adagrad method adapts better than RMSProp.

Altogether, our work opens the door to exploring memory-efficient adaptive Langevin methods \cite{feinberg2024sketchy,zhao2024galore,liang2024memory}.

\begin{figure}
    \centering
    \includegraphics[width=0.32\linewidth]{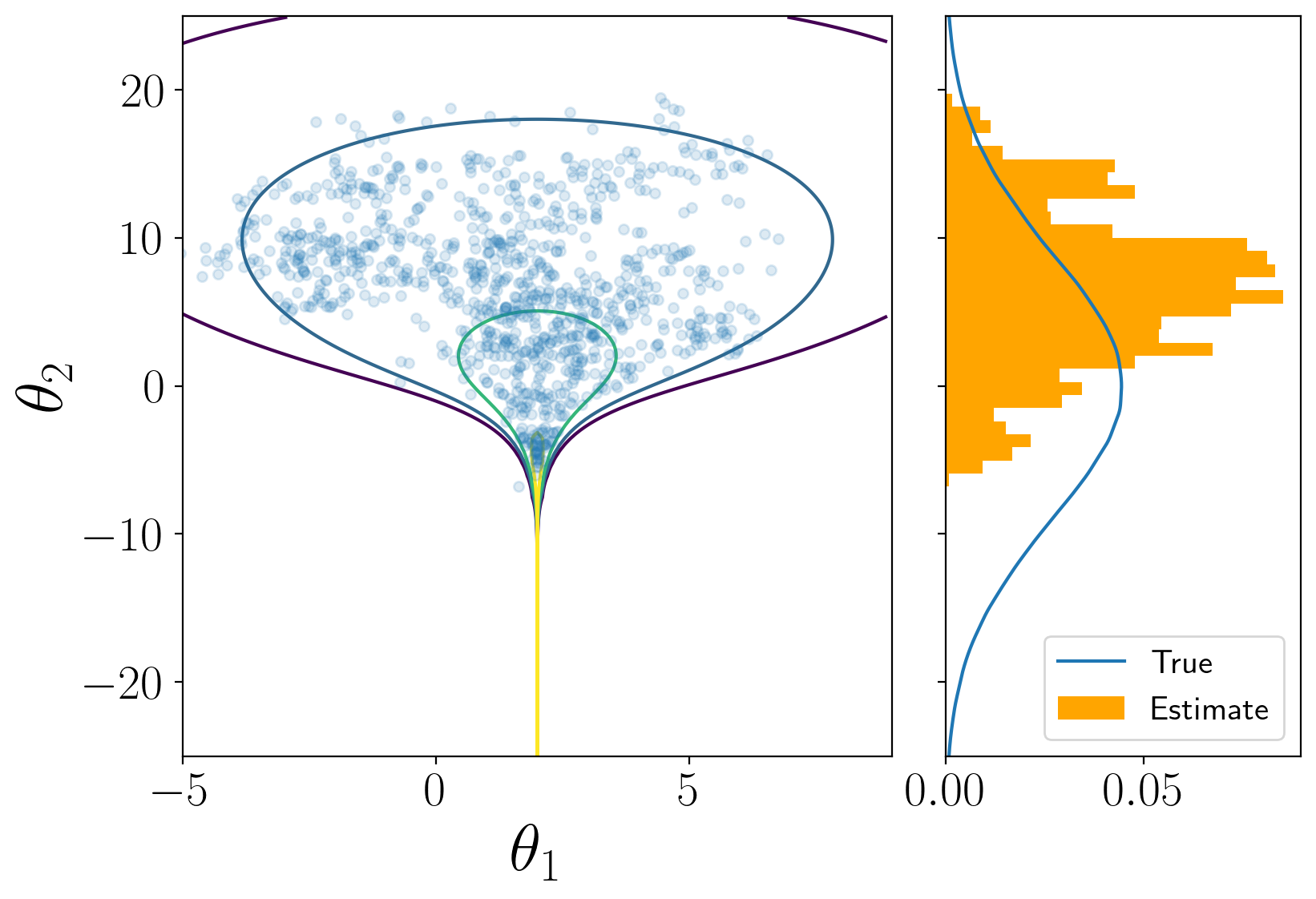}
    \includegraphics[width=0.32\linewidth]{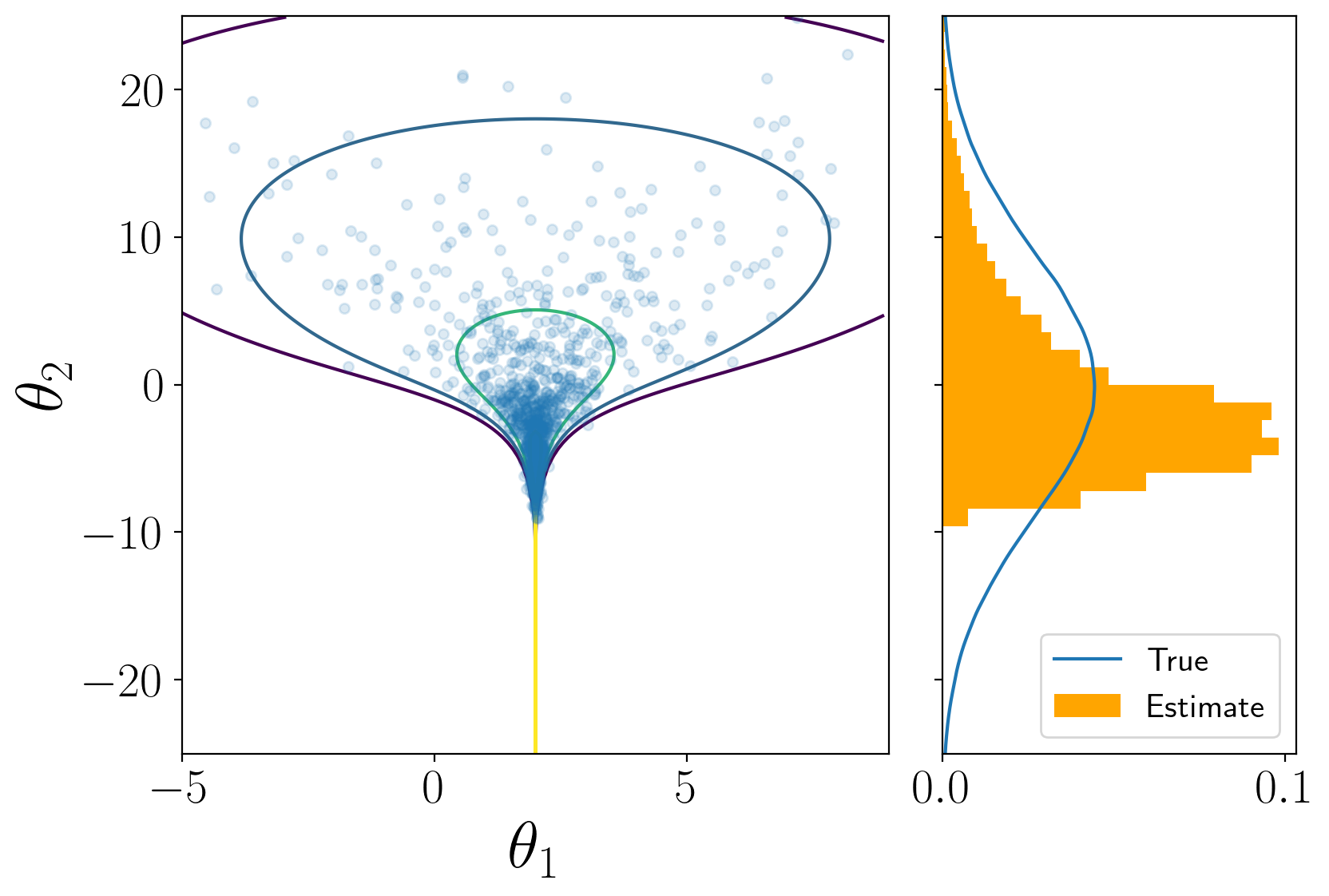}
    \includegraphics[width=0.32\linewidth]{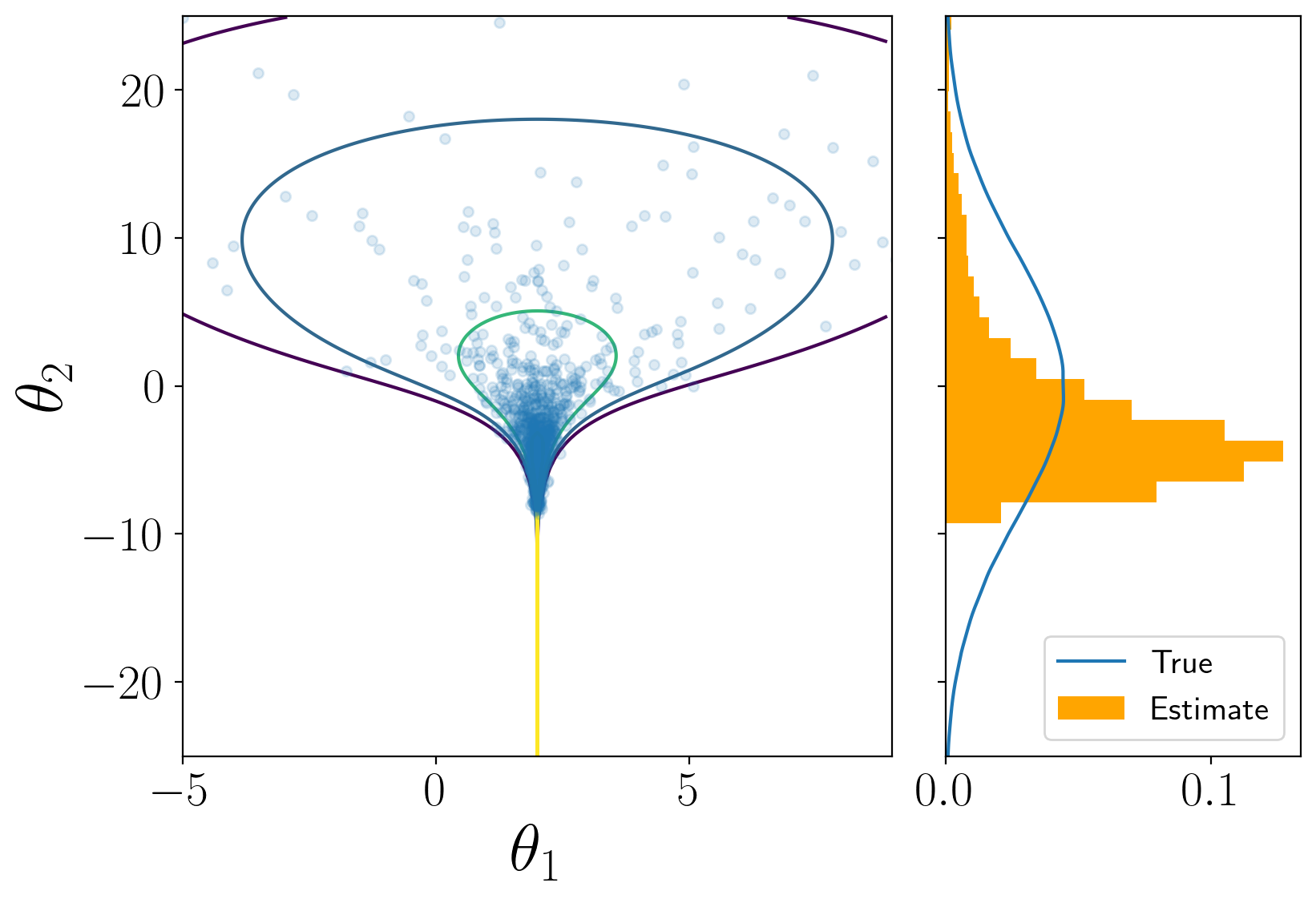}
    \caption{Experiment from \cite{yu2024scalable} demonstrating SLMC on the funnel distribution. \textbf{Left:} SLMC with identity preconditioner \textbf{Middle:} SLMC with adaptive preconditioner based on RMSProp. \textbf{Right:} SLMC with preconditioner based on Adagrad. As we can see, the use of an adaptive preconditioner allows the algorithm to better explore the funnel. We see also that by forcing the preconditioner to be diagonal in RMSProp and align with the axes, the method is able to better explore the funnel.}
    \label{fig:funnel}
\end{figure}

\begin{figure}
    \centering
    \includegraphics[width=0.32\linewidth]{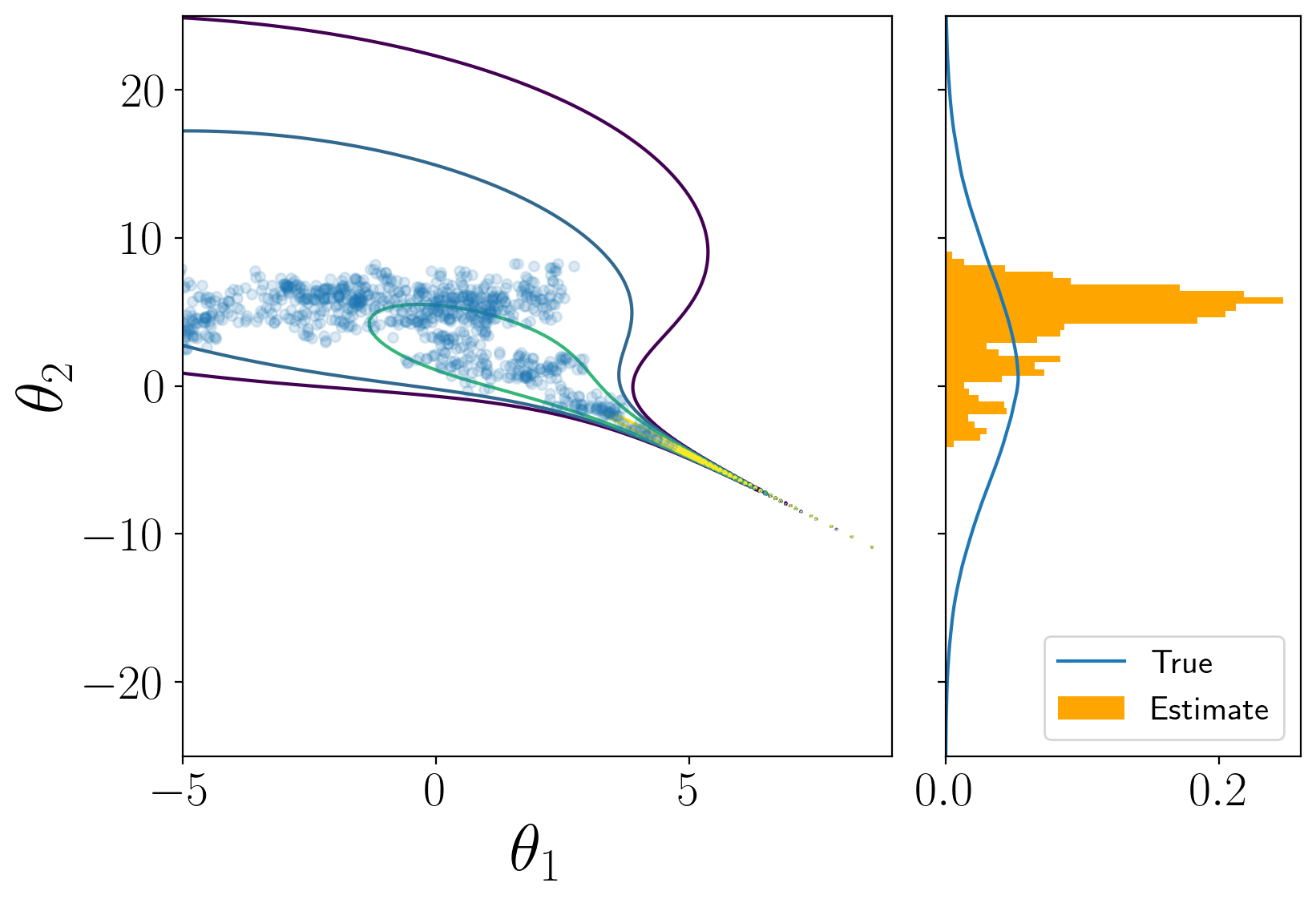}
    \includegraphics[width=0.32\linewidth]{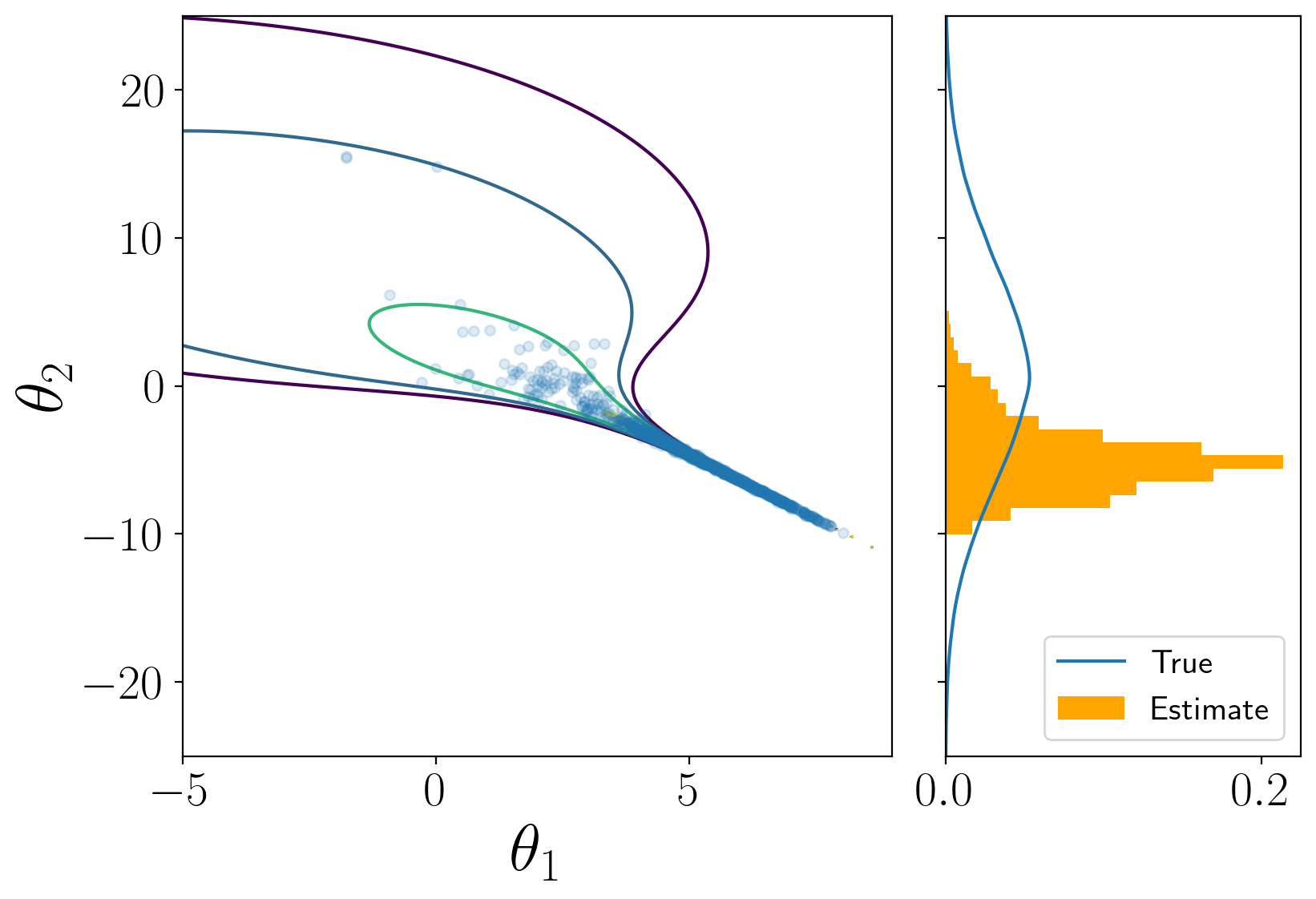}
    \includegraphics[width=0.32\linewidth]{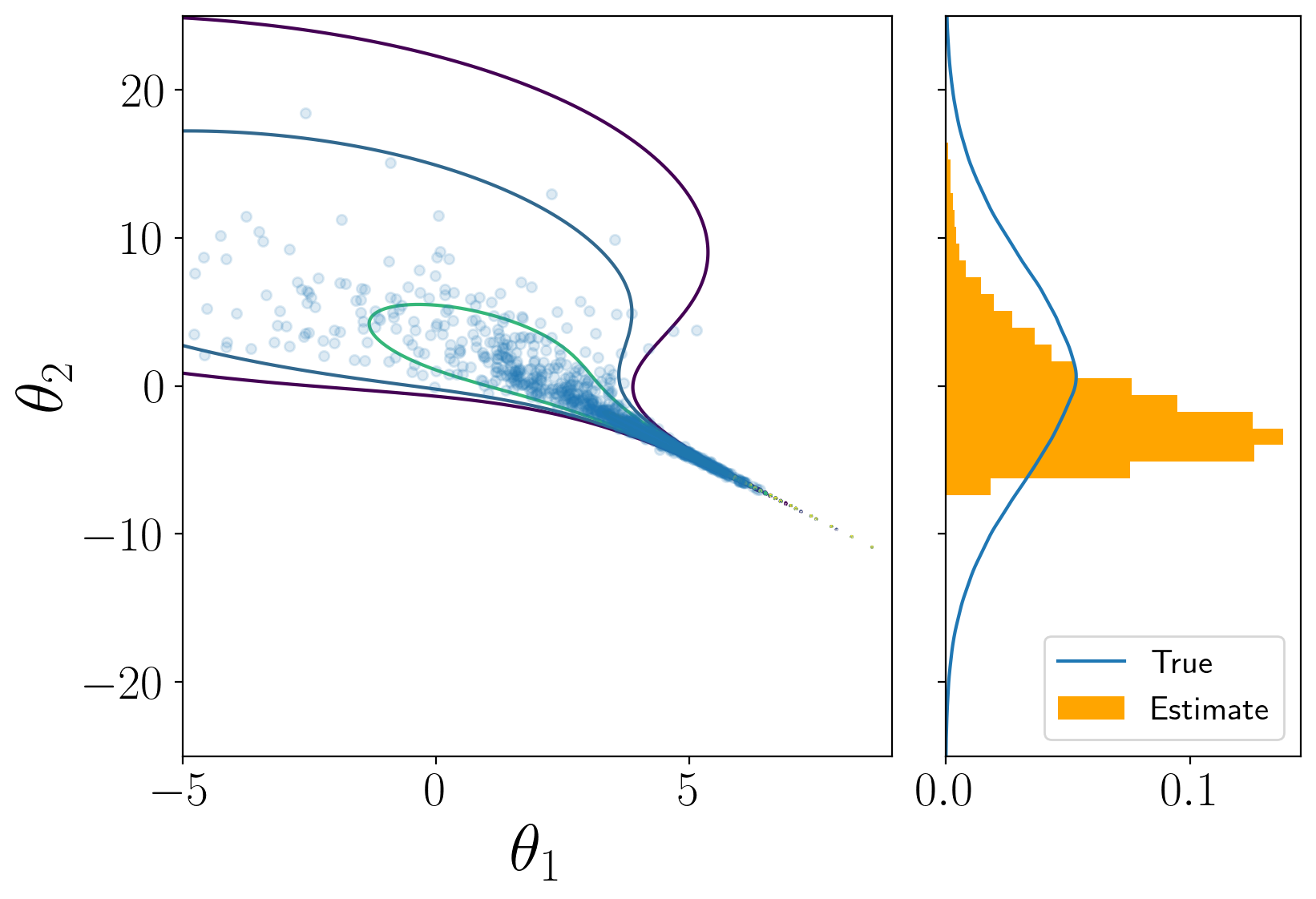}
    \caption{Experiment from \cite{yu2024scalable} demonstrating SLMC on a rotated funnel distribution. \textbf{Left:} SLMC with identity preconditioner \textbf{Middle:} SLMC with adaptive preconditioner based on RMSProp. \textbf{Right:} SLMC with preconditioner based on Adagrad. Here, we see that allowing the adaptation to be non-diagonal allows the method to adapt to the rotated funnel. This allows the SLMC method based on Adagrad to better explore both parts of the funnel.}
    \label{fig:funnelR}
\end{figure}

\section{Discussion}

We have presented Subspace Langevin Monte Carlo, a version of Langevin Monte Carlo that updates along low-dimensional subspaces. This method generalizes Random Coordinate Langevin Monte Carlo, its block coordinate analog, and preconditioned Langevin \cite{bhattacharya2023fast}. Our theoretical results show that through proper adaptation of the selected subspaces and preconditioner, the SLMC method can significantly outperform LMC in terms of overall complexity at a reduced memory cost. Our experiments demonstrate this distinction in practice.

We believe that our work opens many interesting future avenues for research. First, there are many interesting theoretical extensions of SLMC that one can consider. For example, it would be interesting to extend the theory to more general settings, such as convergence under log-Sobolev and Poincar\'e inequalities. It would also be interesting to develop subspace versions of other variances of LMC, such as Underdamped Langevin Monte Carlo. Finally, one can feasibly create preconditioners that adapt to the problem at hand as the method iterates and also study time-varying preconditioners. Another interesting avenue of future research involves the prospect of SLMC as subspace descent in Wasserstein space. It would be interesting to consider subspace descent methods over Wasserstein space for more general functions, where we can view this method as an alternative to sliced Wasserstein gradient flows \cite{bonet2022efficient}. Finally, it would be interesting to incorporate these ideas into the training of score-based diffusion models to further the ideas in put forth in \cite{jing2022subspace} with more principled ways of choosing subspaces. This would open up broader applications for the methods and theory developed in this paper.

\section{Acknowledgements}

This work was funded by the NSF under Award No. 2305315.

\bibliographystyle{plainnat}
\bibliography{refs}

\appendix

\section{Supplementary Proofs}

\subsection{Proof of Propostion \ref{prop:ctssubdescent}}

\begin{proof}
    \begin{align*}
        \partial_t(f(x_t) - f(x_\star)) &=\langle \nabla f(x_t), \bP_t \nabla f(x_t) \rangle \\
        &\leq - m_t \|\nabla f(x_t)\|^2 \\
        &\leq -m_t \alpha (f(x_t) - f(x_\star)).
    \end{align*}
    Therefore by Gr\"onwall's inequality,
    \[
        f(x_t) - f(x_\star) \leq \exp\left( -\int_0^t m_t \alpha dt \right) (f(x_0) - f(x_\star)).
    \]
    If we instead take an expectation, 
    \begin{align*}
        \partial_t(\E f(x_t) - f(x_\star)) &= \E \langle \nabla f(x_t), \bP_t \nabla f(x_t) \rangle \\
        &\leq - c\E \|\nabla f(x_t)\|^2 \\
        &\leq - c\alpha (\E f(x_t) - f(x_\star)).
    \end{align*}
\end{proof}

\subsection{Proof of Lemma \ref{lem:contract}}

\begin{proof}
    
Applying It\^o's lemma to $f(z, z') =  \|z-z'\|_{\bA^{-1}}^2$
\begin{align*}
    f(Z_t, Z_t') - f(Z_0, Z_0') &= 2\int_0^t \langle Z_s - Z_s', \nabla V(Z_s) - \nabla V(Z_s') \rangle ds \\
    &\leq 2\int_0^t m f(Z_s, Z_s') ds.
\end{align*}

Thus we can apply Gr\"onwall's lemma to $ f(Z_s, Z_s')$ to find
\[
     f(Z_t, Z_t') \leq \exp(-2\int_{0}^t  m ds)  f(Z_0, Z_0').
\]
\end{proof}

\subsection{Proof of Theorem \ref{thm:plmc_conv}}
\label{app:plmc_conv}

\begin{proof}
In the case of a fixed $\bA$, this theorem is just a particular case of Theorem 2 in \cite{ahn2021efficient}. We replicate the proof here with our simplified setting, which avoids needing to work with abstract Bregman divergences.

The proof technique works by separating the Langevin update into a gradient update followed by the addition of a Gaussian, which corresponds to a step with respect to the potential energy $\E_\mu V$ followed by a step with respect to the entropy $\E \mu \log \mu$. It then proceeds in three steps: bounding the decrease in the potential energy, showing that the entropy step does not increase the potential energy too much, and then showing that the entropy step decreases the entropy.
    
We first  preconditioned gradient descent, 
\[  
    x_{k+1} = x_k - h \bA \nabla V(x_k).
\]
Assuming that $V$ is $\beta$ smooth relative to $\|\cdot\|_{\bA^{-1}}^2$,
\begin{align*}
    V(x_{k+1}) &\leq V(x_k) + \langle \nabla V(x_k), x_{k+1} - x_k \rangle + \frac{M}{2} \|x_{k+1} - x_k\|^2_{\bA^{-1}} \\
    &= V(x_k) + \Big(\frac{M h^2}{2} -h \Big)\| \nabla V(x_k)\|_{\bA}^2.
\end{align*}
This gives us a relative descent lemma. For example, if one sets $h\leq 1/M$, we get a descent lemma
\[
     \| \nabla V(x_k)\|_{\bA}^2 \leq 2 h M (V(x_{k}) \leq V(x_{k+1})).
\]

Similar to \cite{ambrosio2008gradient,chewi2024log}, when $h \leq 1/M$, we can prove an evolution variational inequality (EVI)
\begin{align*}
    \|x_{k+1} - y \|_{\bA^{-1}}^2&= \|x_k-y\|_{\bA^{-1}}^2 -2h\langle \nabla V(x_k), x_k - y\rangle + h^2 \|\nabla V(x_k)\|_{\bA^{-1}}^2 \\
    &\leq (1-m h) \|x_k-y\|_{\bA^{-1}}^2 -2h(V(x_k)-V(y)) + 2h(V(x_k) - V(x_{k+1}))\\
    &\leq (1-m h) \|x_k-y\|_{\bA^{-1}}^2 - 2h (V(x_{k+1}) - V(y)). 
\end{align*}

Applying this EVI to $X_{kh}^+$, $X_{kh}$ and $Z$, and taking an expectation, we get contraction of the energy with respect to the $W_{2, \bA^{-1}}$ distance
\begin{equation}\label{eq:potbd1}
    \cE(\mu_{kh}^+) - \cE(\pi) \leq \frac{1}{2h} \left[ (1-m h) W_{2, \bA^{-1}}^2(\mu_{kh}, \pi) - W_{2, \bA^{-1}}^2(\mu_{kh}^{+}, \pi)\right].
\end{equation}

We further have that the energy does not increase too much over the entropy step following the same computation in \cite{durmus2019analysis}, which uses the $M$-relative smoothness, 
\begin{equation}\label{eq:potbd2} 
    \cE(\mu_{(k+1)h}) - \cE(\mu_{kh}^+) = M d h.
\end{equation}

Finally, we have that the entropy $\cH$ is relatively convex \cite{ahn2021efficient}. In particular, for $X \sim \mu$ and $Y \sim \nu$ optimally coupled in terms of $W_{2, \bA^{-1}}$, we have
\[
    \cH(\nu) \geq \cH(\mu) + \langle \nabla_{W_2} \cH(\mu)(X), Y-X \rangle.
\]
If we let $X$ be the output of a full step and $Y \sim \pi$, this implies that
\[
    \cH(\pi) \geq \cH(\mu_{(k+1)h}) + \langle \nabla_{W_2} \cH(\mu_{(k+1)h})(X), Y-X \rangle.
\]
Let $Q_t$ denote the semigroup 
\[
    Q_t f(x) = \E f(x + \sqrt{2} \bA^{1/2} B_t),
\]
so that $\mu_{(k+1)h} = \mu_{kh}^+ Q_h$. We claim that $\cH(\mu_{kh}^+ Q_t)$ is the gradient flow of $\cH$ with respect to the $W_{2, \bA^{-1}}$ geometry, and is therefore nonincreasing. Indeed, the $W_{2, \bA^{-1}}$ gradient of $\cH$ is 
\[
    \nabla_{W_{2, \bA^{-1}}} \cH(\mu) = \bA \nabla_{W_2} \log(\mu)
\]
And so the $W_{2,\bA^{-1}}$ gradient flow of $\cH$ is 
\[
    \partial_t \mu_t = \diver(\mu_t \bA \nabla \log(\mu_t).
\]
This is precisely the same as what is found for the Fokker-Planck equation corresponding to the diffusion $\sqrt{2} \bA^{1/2} dB_t$.
Therefore, for $X_{kh+t}^+$ and $Z$ optimally coupled for the $W_{2,\bA^{-1}}$ distance, and since $\cH$ is decreasing along $\mu_{kh}^+ Q_t$,
\[
    \partial_t W_{2,\bA^{-1}}^2(\mu_{kh}^+Q_t, \pi) \leq 2\langle \nabla_{W_2} \cH(\mu_{(k+1)h})(X_{kh+t}^+), Z-X_{kh+t}^+ \rangle \leq \cH(\pi) - \cH(\mu_{(k+1)h}) .
\]
Integrating from $0$ to $h$ yields
\begin{equation}\label{eq:entbd}
    W_{2,\bA^{-1}}^2(\mu_{(k+1)h}, \pi) - W_{2,\bA^{-1}}^2(\mu_{kh}^+, \pi) \leq h [\cH(\pi) - \cH(\mu_{(k+1)h}) ]
\end{equation}

Putting together \eqref{eq:potbd1}, \eqref{eq:potbd2}, and \eqref{eq:entbd},
\begin{equation}\label{eq:bdfixprec}
    W_{2, \bA^{-1}}^2(\mu_{(k+1)h}, \pi) \leq (1-m h) W_{2, \bA^{-1}}^2(\mu_{kh}, \pi) + 2 M d h^2.
\end{equation}

Iterating the inequality,
\begin{align*}
    W_{2, \bA^{-1}}^2(\mu_{Nh}, \pi) &\leq (1-m h)^N W_{2, \bA^{-1}}^2(\mu_{0}, \pi) + \frac{2 M d h}{m}.
\end{align*}

Now in the case of varying $\bA_k$, we note that the proof remains the same, where \eqref{eq:bdfixprec} becomes
\begin{equation}
    W_{2, \bA_k^{-1}}^2(\mu_{(k+1)h}, \pi) \leq (1-m h) W_{2, \bA_k^{-1}}^2(\mu_{kh}, \pi) + 2 M d h^2.
\end{equation}
Using Assumption \ref{assump:precondchange}, we have
\begin{equation}
    W_{2, \bA_{k+1}^{-1}}^2(\mu_{(k+1)h}, \pi) \leq (1-m h) W_{2, \bA_k^{-1}}^2(\mu_{kh}, \pi) + O(h^2) + 2 M d h^2.
\end{equation}
We can again now unroll to find the desired bound.

Note that we need $\bA_k$ to be independent of the particles $X_1, \dots X_k$ in order to take the expectations appropriately in the above proof, hence necessitating Assumption \ref{assump:preconddep}.
\end{proof}

\subsection{Proof of Theorem \ref{thm:conv}}
\label{app:thmproof}

We note that Assumption \ref{assump:relscsm} implies that
\[
    \|\nabla V(x) - \nabla V(y) \|_{\bA_k}^2 \leq M \|y-x\|_{\bA_k^{-1}}^2
\]
for all $k \in \N$.

We begin with a lemma.

\begin{lem} \label{lem:hessexpbd} Suppose that Assumption \ref{assump:relscsm} holds with $\bB = \bA^{-1}$ and Assumption \ref{assump:sm} holds. If $\bW$ is an orthonormal matrix with corresponding relative smoothness parameter $M(\bW)$, then
\[
    \E_{\pi} \| \bW \bW^T \nabla V(Z)  \|_{\bA}^2 \leq M r
\]
\end{lem}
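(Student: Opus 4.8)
The plan is to reduce the bound to a trace/expectation computation that uses the stationarity of $\pi$ together with the relative smoothness along the block $\bW_i$. First I would write $\|\bW_i\bW_i^T\nabla V(Z)\|_{\bA}^2 = \nabla V(Z)^T \bW_i \bW_i^T \bA \bW_i \bW_i^T \nabla V(Z)$, and note that since $\bW_i$ consists of eigenvectors of $\bA$ with eigenvalue block $\bD_i$, we have $\bW_i^T \bA \bW_i = \bD_i$, so the quantity equals $(\bW_i^T \nabla V(Z))^T \bD_i (\bW_i^T \nabla V(Z))$. Equivalently, using the notation of Assumption \ref{assump:sm} with $\bU = \bW_i$, this is exactly $\|\bW_i\bW_i^T\nabla V(Z)\|_{\bA}^2$ appearing in the directional smoothness inequality.

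Next I would invoke the integration-by-parts identity that holds because $\pi \propto e^{-V}$: for any fixed matrix $\bC$, $\E_\pi[\nabla V(Z)^T \bC \nabla V(Z)] = \E_\pi[\Tr(\bC \nabla^2 V(Z))]$, which is the standard Gaussian-type / Stein identity for the measure $\pi$. Applying this with $\bC = \bW_i \bW_i^T \bA \bW_i \bW_i^T$ gives
\[
    \E_\pi \|\bW_i\bW_i^T \nabla V(Z)\|_{\bA}^2 = \E_\pi \Tr\!\big( \bW_i\bW_i^T\bA\bW_i\bW_i^T \nabla^2 V(Z)\big).
\]
Then I would use the twice-differentiable form of Assumption \ref{assump:sm} restricted to the subspace spanned by $\bW_i$, namely $\bW_i^T \nabla^2 V(Z) \bW_i \preceq M_i \bW_i^T \bA^{-1} \bW_i = M_i \bD_i^{-1}$ (the block version of $\nabla^2 V \preceq M_i \bA^{-1}$ along the subspace). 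Plugging this in and using cyclicity of the trace, $\Tr(\bW_i\bW_i^T\bA\bW_i\bW_i^T\nabla^2 V) = \Tr(\bD_i \cdot \bW_i^T\nabla^2 V \bW_i) \leq M_i \Tr(\bD_i \bD_i^{-1}) = M_i \Tr(\bI_r) = M_i r$. Taking expectations preserves the bound, which yields the claim.

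The main obstacle I anticipate is making the integration-by-parts / Stein identity rigorous: it requires $\pi$ to have enough decay that the boundary terms vanish (which is guaranteed here by relative strong convexity, since $\pi$ is then subgaussian), and one must be careful that $V$ is $C^2$ so that $\nabla^2 V$ makes sense pointwise — otherwise one has to phrase the argument via the directional smoothness inequality of Assumption \ref{assump:sm} applied to increments and a limiting argument, or work with the smoothness inequality directly in the form $\|\bW_i\bW_i^T(\nabla V(x)-\nabla V(y))\|_\bA \le M_i\|\bW_i\bW_i^T(x-y)\|_{\bA^{-1}}$ and integrate. A secondary point to handle cleanly is the translation between the $\|\cdot\|_\bA$ norm on the gradient side and the $\|\cdot\|_{\bA^{-1}}$ norm on the Hessian side, but this is purely the bookkeeping that $\bW_i^T\bA\bW_i = \bD_i$ and $\bW_i^T\bA^{-1}\bW_i = \bD_i^{-1}$, so it should go through without difficulty.
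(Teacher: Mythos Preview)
Your proposal is correct and is essentially the same argument as the paper's: the paper obtains your integration-by-parts identity by writing $\E_\pi[\cL V]=0$ for the generator $\cL$ of the preconditioned diffusion with $\bC=\bW_i\bD_i\bW_i^T$, which is exactly the Stein identity $\E_\pi[\nabla V^T\bC\nabla V]=\E_\pi[\Tr(\bC\nabla^2 V)]$, and then bounds the trace via the same directional smoothness $\bW_i^T\nabla^2 V\bW_i\preceq M_i\bD_i^{-1}$ to get $M_i r$. The only difference is packaging---generator language versus Stein/integration-by-parts language---so you are on track.
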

\begin{proof}
    Suppose that we are running \eqref{eq:pld} with the preconditioner $\bW \bW^T \bA \bW \bW^T$. Let $\cL$ be the generator of this process, which takes the form
    \[
        \cL f = \Tr(\bW \bW^T \bA \bW \bW^T \nabla^2 V) - \langle \bW \bW^T \bA \bW \bW^T \nabla V, \nabla f \rangle. 
    \]
    Since $\pi \propto \exp(-V)$ is stationary, we have
    \[
        0 = \E_{\pi} \cL V = \E \Tr(\bW \bW^T \bA \bW \bW^T \nabla^2 V) - \langle \bW \bW^T \bA \bW \bW^T \nabla V, \nabla V \rangle.
    \]
    We have 
    \begin{align*}
        \Tr(\bW \bW^T \bA \bW \bW^T \nabla^2 V) &= \Tr((\bW \bW^T \bA \bW \bW^T)^{1/2} \nabla^2 V (\bW \bW^T \bA \bW \bW^T)^{1/2}) \\
        &\leq M(\bW) \Tr((\bW \bW^T \bA \bW \bW^T)^{1/2} \bA^{-1} (\bW \bW^T \bA \bW \bW^T)^{1/2}) \\
        &= M(\bW) \Tr([\bW^T \bA^{-1} \bW] [\bW^T \bA \bW] ) \\
        &= M(\bW) r
    \end{align*}
\end{proof}

We begin now restate our main theorem.
\begin{thm}
    Under Assumptions \ref{assump:relscsm}-\ref{assump:sm}, and $h \leq \min\left(\min \phi_{ik}/M, \right)$
    \begin{align*}
        W_{2,\bA_N^{-1}}(\mu_N, \pi) &\lesssim \exp(-\frac{h m N}{4}) W_{2,\bA_0^{-1}}(\mu_0, \pi) +\frac{\sqrt{r  h }}{ m } \sqrt{\sum_i   \frac{M_{ik}^2}{\phi_{ik}}} .
    \end{align*}
\end{thm}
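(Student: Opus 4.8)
The plan is to follow the Wasserstein coupling argument of \citet{dalalyan2019user} as adapted in \citet{ding2021random}, but carried out in the $\|\cdot\|_{\bA^{-1}}$ geometry and using relative (rather than absolute) convexity/smoothness. First I would set up a synchronous coupling: let $X_k$ evolve by \eqref{eq:slmc} from $\mu_0$, and let $Z_0\sim\pi$ be optimally coupled to $X_0$ in $W_{2,\bA^{-1}}$, with $Z_t$ evolving along the continuous diffusion \eqref{eq:pld} using a Brownian motion consistent with the Gaussian increments used in the discrete step on block $i$. The aim is a one-step recursion of the form
\[
  \E\|X_{k+1}-Z_{(k+1)h}\|_{\bA^{-1}}^2 \le (1-c\,h m)\,\E\|X_k-Z_{kh}\|_{\bA^{-1}}^2 + (\text{discretization bias}),
\]
from which the theorem follows by iterating a geometric recursion and taking square roots (using $\sqrt{a+b}\le\sqrt a+\sqrt b$ and summing the geometric series $\sum (1-chm)^j \le 1/(chm)$), which produces the $\exp(-hmN/4)$ contraction factor and the additive term $\frac{\sqrt{rh}}{m}\sqrt{\sum_i M_i^2/\phi_i}$.

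The core of the argument is decomposing the one-step error. I would write $X_{k+1}-Z_{(k+1)h}$ as a sum of three pieces: (i) the exact contraction piece, controlled by Lemma \ref{lem:contract}, which gives the factor $(1-mh)$ on the previous distance after conditioning on the choice of block $\bP_k$ and using that $\E_i[h_k\bP_k]=h\bA$ so the drift matches in expectation; (ii) the discretization error from freezing $\nabla V$ at $X_k$ over the interval $[kh,(k+1)h]$, which is handled via Assumption \ref{assump:sm} (directional smoothness) to bound $\|\bW_i\bW_i^T(\nabla V(X_s)-\nabla V(X_k))\|_{\bA}$ by $M_i$ times a displacement that itself scales like $\sqrt{h}$ plus the Brownian fluctuation of size $\sqrt{h}$; and (iii) the variance term coming from using a single random block rather than the full preconditioner — here the step size inflation $h_k=h/\phi_i$ means the conditional second moment of the injected drift on block $i$ carries a $1/\phi_i$ factor. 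Combining (ii) and (iii) with Lemma \ref{lem:hessexpbd}, which gives $\E_\pi\|\bW_i\bW_i^T\nabla V(Z)\|_{\bA}^2 \le M_i r$, yields a per-step bias proportional to $h^2 r \sum_i M_i^2/\phi_i$ (up to the cross terms, which one absorbs using Young's inequality, paying the price that forces $h\le \min\phi_i/M$ and turns the contraction constant from $m$ into $m/4$ or similar).

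The main obstacle I expect is term (iii) — correctly accounting for the interaction between the random block selection and the Gaussian noise, and getting the variance bookkeeping to land exactly on $\sum_i M_i^2/\phi_i$ rather than something coarser. The subtlety is that the discrete step uses $\bP_k^{1/2}\xi_k$ with $\bP_k=\bW_i\bD_i\bW_i^T$ whereas the continuous comparison process uses $\bA^{1/2}dB_t$, so one must carefully choose the coupling of the noise (e.g. let $\xi_k$ generate the part of the Brownian increment lying in $\range(\bW_i)$, while the complementary directions of $Z$ evolve with independent noise), and then control the mismatch in $\|\cdot\|_{\bA^{-1}}$. Once the noise coupling is fixed, the gradient-drift mismatch must be measured only in the block directions where it is active, which is exactly what Assumption \ref{assump:sm} is tailored for; the condition $h\le\min\phi_i/M$ is what keeps the effective per-block step $h_k M_i/\phi_i \cdot \phi_i = h M_i \le h M \le \min_j\phi_j$ in the regime where the block-wise descent/EVI estimates are valid. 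After that, everything reduces to taking expectations over the block index, invoking Lemma \ref{lem:hessexpbd}, and iterating — routine but bookkeeping-heavy.
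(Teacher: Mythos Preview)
Your overall architecture is right (synchronous coupling, one-step recursion, geometric sum), but the choice of comparison process is where the argument breaks. You propose to couple $X_k$ to $Z_t$ running the \emph{full} diffusion \eqref{eq:pld}. This introduces an irreducible noise mismatch: over one step, SLMC injects $\sqrt{2h/\phi_i}\,\bP_k^{1/2}\xi_k$ supported on the $r$-dimensional block $\range(\bW_i)$, whereas PLD over time $h$ injects $\sqrt{2h}\,\bA^{1/2}$ times a full $d$-dimensional Brownian increment. Even with the best synchronous coupling, the $\|\cdot\|_{\bA^{-1}}^2$-norm of the noise difference has expectation of order $h\,d$ per step (the orthogonal $(d-r)$ directions contribute $2h(d-r)$, and the in-block scale mismatch $\sqrt{1/\phi_i}$ versus $1$ contributes another $O(h)$ term). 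This is $O(h)$, not $O(h^2)$, so after the geometric sum it leaves a bias of order $\sqrt{d/m}$ that does \emph{not} vanish as $h\to 0$. Your term (iii) is precisely this obstruction, and it cannot be absorbed by Young's inequality into the claimed $h^2 r\sum_i M_i^2/\phi_i$ bias; nor does Lemma~\ref{lem:hessexpbd} help, since the issue is the diffusion term, not the drift.

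The paper sidesteps this entirely by a different choice of auxiliary process: $Z_{k+1}$ is obtained from $Z_k$ by running the continuous diffusion with the \emph{same} random block preconditioner $\bP_k$ and the \emph{same} Gaussian $\xi_k$ as the SLMC step, over the same effective time $h_i=h/\phi_i$. Then the noise cancels exactly in $\Delta_k=Z_k-X_k$, leaving only the drift-discretization error within the active block (this is where Assumption~\ref{assump:sm} and Lemma~\ref{lem:hessexpbd} enter). Stationarity of $\pi$ under each block diffusion (a special case of Theorem~1) ensures $Z_k\sim\pi$ for all $k$. The contraction does \emph{not} come from Lemma~\ref{lem:contract}; instead one expands $\E\|\bW_i\bW_i^T\Delta_{k+1}\|_{\bA^{-1}}^2$ block by block, sums over $i$ to reconstruct the full inner product $\langle\Delta_k,\nabla V(Z_k)-\nabla V(X_k)\rangle$, and then applies relative strong monotonicity directly. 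Your instinct that ``the interaction between random block selection and Gaussian noise'' is the crux is correct---the resolution is to make the comparison process share that randomness, not to average it out.
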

\begin{proof}
Let $\bP_k = \bW_{ik}  \bD_{ik} \bW_{ik}^T$ be the projection chosen at the $k$th iteration, where $\bW_{ik} \in O(d,r)$ and $\bD_{ik}$ is diagonal.

We define the auxiliary process
\begin{equation}
    Z_{k+1} = Z_k(h) = Z_k(0) - \int_{0}^{h_{ik}} \bP_k \nabla V(Z_k(s)) ds +  \sqrt{2h} \bP_k^{1/2} \xi_k.
\end{equation}
Here, $\xi_k$ is the same noise as \eqref{eq:slmc}.
Let $\Delta_k = Z_k - X_k$.
We have
\[
    \Delta_{k+1} = \Delta_{k} - \bP_k \Big[\int_{0}^{h_{ik}}  \nabla V(Z_k(s)) ds -  \nabla V(X_k) \Big].
\]
After multiplying by $\bW_k^T$, we see that really we are looking at a difference in block-coordinate updates between $Z_k$ and $X_k$ in the new basis:
\[
    \bW_{ik}^T \Delta_{k+1} = \bW_{ik}^T\Delta_{k} -  \bD_{ik} \bW_{ik}^T \Big[\int_{0}^{h_{ik}}  \nabla V(Z_k(s)) ds -  \nabla V(X_k) \Big].
\]

By assumption, $\bW_k$ is the full basis from which we select $\bW_{ik}$, which is divided into $d/r$ blocks, each of which we select with probability $\phi_{ik}$. These blocks are denoted by $\bW_{ik}$. 
We have
\[
    \E \|\bW_{ik} \bW_{ik}^T \Delta_{k+1}\|_{\bA_k^{-1}}^2 = \phi_{ik} \E [ \|\bW_{ik}\bW_{ik}^T \Delta_{k+1}\|_{\bA_k^{-1}}^2 | i_k = i] + (1-\phi_{ik}) \E \|\bW_{ik}\bW_{ik}^T \Delta_k\|_{\bA_k^{-1}}^2.
\]
Going term by term, given $i$ is chosen,
\begin{align} \label{eq:young1}
    \E [ \| \bW_{ik}\bW_{ik}^T \Delta_{k+1}\|_{\bA_k^{-1}}^2 ] = &(1+a) \E \| \bW_{ik}\bW_{ik}^T\Delta_{k+1} + \\ \nonumber  
    &\bW_{ik}\bD_{ik} \bW_{ik}^T \int_0^{h_{ik}} \nabla V(Z_k(s) - \nabla V(Z_k) ds \|_{\bA_k^{-1}}^2 + \\ \nonumber
    &\Big(1 + \frac{1}{a} \Big) \E \| \bW_{ik}\bD_{ik} \bW_{ik}^T \int_0^{h_{ik}} \nabla V(Z_k(s) - \nabla V(Z_k) ds\|_{\bA_k^{-1}}^2.
\end{align}
The first term of this is then bounded by
\begin{align*}
    \E \|  \bW_{ik} \bW_{ik}^T  \Delta_{k+1} + & \bW_{ik} \bD_{ik} \bW_{ik}^T  \int_0^{h_{ik}} \nabla V(Z_k(s) - \nabla V(Z_k) ds \|^2 \leq \E \|\bW_{ik} \bW_{ik}^T  \Delta_{k}\|_{\bA_k^{-1}}^2 \\
    &-2 h_{ik} \E \langle \bW_{ik}^T \Delta_k,  \bD_{ik} \bW_{ik}^T (\nabla V(Z_k) - \nabla V(X_k)) \rangle_{\bA_k^{-1}} \\
    &+ h_{ik}^2 \E \|\bW_{ik} \bD_{ik} \bW_{ik}^T (\nabla V(Z_k) - \nabla V(X_k)) \|_{\bA_k^{-1}}^2.
\end{align*}
Again, given $i$ is chosen, following the argument in the proof of Lemma 12 of \cite{ding2021random}, and using Lemma \ref{lem:hessexpbd}, while being careful with our notion of relative smoothness, the second term of \eqref{eq:young1} is bounded by
\begin{align*}
    \E &\|\bW_{ik} \bD_{ik} \bW_{ik}^T  \int_0^{h_{ik}} \nabla V(Z_k(s)) - \nabla V(Z_k) \di s\|_{\bA^{-1}}^2 \\
    &\leq h_{ik} \int_0^{h_{ik}} \E \| \bW_{ik}\bD_{ik} \bW_{ik}^T (\nabla V(Z_k(s)) - \nabla V(Z_k)) \|_{\bA^{-1}}^2 \di s
    \\ 
    &= h_{ik} \int_0^{h_{ik}} \E \| \bW_{ik}\bW_{ik}^T (\nabla V(Z_k(s)) - \nabla V(Z_k)) \|_{\bA}^2 \di s
    \\  
    &\leq h_{ik} M_{ik}^2 \int_0^{h_{ik}}  \E \| Z_k(s) -  Z_k \|_{\bA^{-1}}^2 \di s\\
    &= h_{ik} M_{ik}^2 \int_0^{h_{ik}}  \E \| \int_{0}^s \bP_k \nabla V(Z_k(\tau)) \di \tau +  \sqrt{2} \bP_k^{1/2} \xi_k \|_{\bA^{-1}}^2 \di s\\
    &\leq 2h_{ik} M_{ik}^2 \int_0^{h_{ik}}  \E \| \int_{0}^s \bP_k \nabla V(Z_k(\tau)) \di \tau \|_{\bA^{-1}}^2 \di s+2h_{ik} M_{ik}^2 \int_0^{h_{ik}}  \E \|    \sqrt{2h_{ik}} \bP_k^{1/2} \xi_k \|_{\bA^{-1}}^2 \di s\\
    &\leq 2h_{ik}^2 M_{ik}^2 \int_0^{h_{ik}} \int_{0}^s \E \|  \bP_k \nabla V(Z_k(\tau))  \|_{\bA^{-1}}^2 \di \tau \di s+ 4r h_{ik}^3 M_{ik}^2   \\
    &\leq  h_{ik}^4 M_{ik}^2 \E_{\pi} \| \bW_{ik} \bW_{ik}^T \nabla V(Z)  \|_{\bA}^2 +4r h_{ik}^3 M_{ik}^2   \\
    &\leq   h_{ik}^4 M_{ik}^3 r + 4r  h_{ik}^3 M_{ik}^2.
\end{align*}
Here, we use the fact that $\bP_k^{1/2} \bA_k^{-1} \bP_k^{1/2} = \bW_{ik} \bW_{ik}^T$ and $\E_{\pi} \| \bW_{ik} \bW_{ik}^T \nabla V(Z)  \|_{\bA_k}^2 \leq M_{ik} r$ by Lemma \ref{lem:hessexpbd}.
Using $h_{ik} = h/\phi_{ik}$, we can put these together to yield
\begin{align*}
    \E \|\bW_{ik} \bW_{ik}^T \Delta_{k+1} \|_{\bA^{-1}}^2 &\leq (1+a \phi_{ik}) \E \|\bW_{ik} \bW_{ik}^T \Delta_{k}\|_{\bA_k^{-1}}^2 \\
    &-2 (1+a) h \E \langle \bW_{ik} \bW_{ik}^T \Delta_k,  \bW_{ik} \bD_{ik} \bW_{ik}^T (\nabla V(Z_k) - \nabla V(X_k)) \rangle_{\bA_k^{-1}}  \\
    &+ (1+a) \frac{h^2}{\phi_{ik}} \E \|  \bW_{ik} \bD_{ik} \bW_{ik}^T (\nabla V(Z_k) - \nabla V(X_k))\|_{\bA_k^{-1}}^2 \\
    &+\Big( 1+ \frac{1}{a} \Big) \phi_{ik} [ \frac{r h^4 M_{ik}^3 r}{\phi_{ik}^4} + \frac{4r  h^3 M_{ik}^2}{\phi_{ik}^3}]
\end{align*}

Letting $a = h m /  \phi_{ik}$, $1+1/a \lesssim \phi_{ik}/h m$, $h \leq \min \phi_{ik}/M$,
\begin{align*}
    \E \|\bW_{ik} \bW_{ik}^T \Delta_{k+1} \|_{\bA^{-1}}^2 &\leq (1+h m) \E \|\bW_{ik} \bW_{ik}^T \Delta_{k}\|_{\bA_k^{-1}}^2 \\
    &-2 (1+h m/\phi_{ik}) h \E \langle \bW_{ik} \bW_{ik}^T \Delta_k,  \bW_{ik} \bD_{ik} \bW_{ik}^T (\nabla V(Z_k) - \nabla V(X_k)) \rangle_{\bA_k^{-1}}  \\
    &+ (1+h m/\phi_{ik}) \frac{h^2}{\phi_{ik}} \E \|  \bW_{ik} \bD_{ik} \bW_{ik}^T (\nabla V(Z_k) - \nabla V(X_k))\|_{\bA_k^{-1}}^2 \\
    &+\frac{1}{ m} [ \frac{r h^3 M_{ik}^3}{\phi_{ik}^2} + \frac{4r  h^2 M_{ik}^2}{\phi_{ik}}] \\
    &\leq (1+h m) \E \|\bW_{ik} \bW_{ik}^T \Delta_{k}\|_{\bA_k^{-1}}^2 \\
    &-2  \E \langle \bW_{ik}\bW_{ik}^T \Delta_k,  \bW_{ik}\bD_{ik} \bW_{ik}^T (\nabla V(Z_k) - \nabla V(X_k)) \rangle_{\bA_k^{-1}}  \\
    &+ \frac{2}{\phi_{ik}} \E \| \bW_{ik}\bD_{ik} \bW_{ik}^T (\nabla V(Z_k) - \nabla V(X_k))\|_{\bA_k^{-1}}^2 \\
    &+\frac{1}{ m} [ \frac{r h^3 M_{ik}^3}{\phi_{ik}^2} + \frac{4r  h^2 M_{ik}^2}{\phi_{ik}}] 
\end{align*}

Summing, using $\bA_k = \sum_i  \bW_{ik} \bD_{ik} \bW_{ik}^T$,
\begin{align*}
    \E \|\Delta_{k+1} \|_{\bA_k^{-1}}^2 &\leq (1+h m) \E \|\Delta_{k}\|_{\bA_k^{-1}}^2 \\
    &-2 h \E \langle  \Delta_k,  \nabla V(Z_k) - \nabla V(X_k) \rangle \\
    &+ \frac{2 h^2}{\min \phi_{ik}} \E \| (\nabla V(Z_k) - \nabla V(X_k))\|^2_{\bA_k} \\
    &+\frac{1}{ m}\sum_i [ \frac{r h^3 M_{ik}^3}{\phi_{ik}^2} + \frac{4r  h^2 M_{ik}^2}{\phi_{ik}}] 
\end{align*}
The smoothness in Assumption \ref{assump:relscsm} implies that
\begin{align*}
     \| \nabla V(Z_{k})-\nabla V(X_{k})\|_{\bA_k}^2 &\leq M \| Z_{k}-X_{k}\|_{\bA_k^{-1}}^2.
\end{align*}
Therefore, using the previous display and using strong relative monotonicity that is implied by Assumption \ref{assump:relscsm},
\[
    \langle \nabla V(x) - \nabla V(y), x-y \rangle \geq m \|x-y\|_{\bA_k^{-1}}^2,
\]
we find
\begin{align*}
    \E \|\Delta_{k+1} \|_{\bA_k^{-1}}^2 &\leq (1+h m-2h m + \frac{2}{\min \phi_{ik}}  h^2 M) \E \|\Delta_{k}\|_{\bA_k^{-1}}^2 \\
    &+\frac{1}{ m}\sum_i [ \frac{r h^3 M_{ik}^3}{\phi_{ik}^2} + \frac{4r  h^2 M_{ik}^2}{\phi_{ik}}] 
\end{align*}
Since $h \lesssim \frac{\min \phi_{ik}}{M}$,
\begin{align*}
    \E \|\Delta_{k+1} \|_{\bA_k^{-1}}^2 &\leq (1-\frac{h m}{2}) \E \|\Delta_{k}\|_{\bA_k^{-1}}^2 +\frac{1}{ m}\sum_i [  \frac{4r  h^2 M_{ik}^2}{\phi_{ik}}].
\end{align*}
Using Assumption \ref{assump:precondchange} and unrolling,
\begin{align*}
    \E \|\Delta_{k+1} \|_{\bA_{k+1}^{-1}}^2 &\lesssim (1-\frac{h m}{2})^k \E \|\Delta_{0}\|_{\bA_0^{-1}}^2 +\frac{1}{ m}\sum_j (1-\frac{h m}{2})^{k-j} \sum_i [  \frac{4r  h^2 M_{ij}^2}{\phi_{ij}}]  .
\end{align*}
Optimally coupling $\Delta_0$ yields the result.

Again, throughout, we used the fact that Assumption \ref{assump:preconddep} implies that $\bA_k$ is independent of the particles $X_1, \dots X_k$ to take the expectations appropriately.
\end{proof}

\end{document}